\numberwithin{equation}{section}
\theoremstyle{plain}
\newtheorem{theorem}{Theorem}[section]
\newtheorem{proposition}[theorem]{Proposition}
\newtheorem{lemma}[theorem]{Lemma}
\newtheorem{corollary}[theorem]{Corollary}
\theoremstyle{definition}
\newtheorem{definition}[theorem]{Definition}
\theoremstyle{remark}
\newtheorem{remark}[theorem]{Remark}
\crefname{theorem}{Theorem}{Theorems}
\crefname{lemma}{Lemma}{Lemmas}
\crefname{proposition}{Proposition}{Propositions}
\crefname{corollary}{Corollary}{Corollaries}
\crefname{definition}{Definition}{Definitions}
\crefname{remark}{Remark}{Remarks}
\crefname{equation}{Eq.}{Eqs.}
\crefname{figure}{Fig.}{Figs.}
\crefname{section}{Section}{Sections}
\DeclareMathOperator*{\argmax}{arg\,max}
\DeclareMathOperator{\KL}{KL}
\newcommand{\R}{\mathbb{R}}
\newcommand{\1}{\mathbbm{1}}
\newcommand{\ip}[2]{\left\langle #1,\, #2 \right\rangle}
\newcommand{\Hent}{H}
\newcommand{\simplex}{\Delta^{V-1}}
\newcommand{\avgscore}{\bar{s}}
\title{Manifold Trajectories in Next-Token Prediction: \\ From Replicator Dynamics to Softmax Equilibrium}
\author{Christopher R. Lee-Jenkins  \thanks{Centralia College,  600 Centralia College Blvd., Centralia, WA 98531 Email: \texttt{christopher.lee@centralia.edu}}}
\date{\today}
\begin{document}
\maketitle

\begin{abstract}
Decoding in large language models is often described as scoring tokens and normalizing with softmax. We give a minimal, self-contained account of this step as a constrained variational principle on the probability simplex. The discrete, normalization-respecting ascent is the classical multiplicative-weights (entropic mirror) update; its continuous-time limit is the replicator flow. From these ingredients we prove that, for a fixed context and temperature, the next-token distribution follows a smooth trajectory inside the simplex and converges to the softmax equilibrium. This formalizes the common “manifold traversal” intuition at the output-distribution level. The analysis yields precise, practice-facing consequences: temperature acts as an exact rescaling of time along the same trajectory, while top-k and nucleus sampling restrict the flow to a face with identical guarantees. We also outline a controlled account of path-dependent score adjustments and their connection to loop-like, hallucination-style behavior. We make no claims about training dynamics or internal representations; those are deferred to future work.
\end{abstract}

\section{Introduction}

Large language models (LLMs) are typically described as choosing the next token by
scoring a vocabulary and normalizing the scores with a softmax at some temperature.
This account is operationally correct, but it leaves open a conceptual question often
raised in the community: \emph{what does the decoding step look like as a dynamical process?}
In particular, when practitioners say that models ``traverse a manifold'' during decoding,
is this merely a metaphor, or can one make it a theorem, precisely stated and proved from
basic, verifiable ingredients?

This paper gives a minimal, self-contained answer at the level of the \emph{output distribution}
for a fixed context. We show that the next-token distribution is characterized by a classical
free-energy principle and that the normalization-respecting ascent to that equilibrium admits
both (i) a one-line discrete update—entropic mirror ascent, i.e., multiplicative weights—and
(ii) a continuous-time limit—the replicator differential equation on the probability simplex.
From these pieces we obtain a rigorous statement of the manifold picture: the decoding
distribution follows a smooth trajectory in the open simplex and converges to the softmax
equilibrium. The development uses only standard tools (entropy, convex duality, mirror
methods, replicator dynamics); no geometric background is
assumed in the body of the paper.

\paragraph{Contributions.}
\begin{itemize}
\item We provide a derivation of next-token decoding as a \emph{constrained
variational problem} on the probability simplex. The discrete update is a KL-prox mirror step
(multiplicative weights), and the continuous limit is the \emph{replicator} ODE.
\item We prove a \emph{manifold-traversal theorem} (Thm.~\ref{thm:manifold}): for fixed scores and
temperature, the decoding distribution traces a $C^1$ curve inside the open simplex and
converges to softmax. This formalizes a common heuristic—``decoding moves along a manifold’’—
as a theorem in the next-token setting.
\item We extract precise practitioner consequences: (a) \emph{temperature} acts as an exact
\emph{time-rescaling} of the same trajectory; (b) \emph{top-$k$/nucleus} truncation simply
restricts the flow to a face of the simplex with identical guarantees; (c) mild path dependence
in score adjustments can yield loops and brittle attractors, offering a controlled language for
``hallucination''-like behavior.
\end{itemize}

\paragraph{Scope.}We focus exclusively on the \emph{output distribution over next tokens} for a fixed context.
We do not make claims about training dynamics, internal representations, or production decoders
literally integrating an ODE; the ODE appears as a continuous-time limit of the standard
multiplicative update. A broader geometric framework that treats hidden states as coordinates
on a high-dimensional manifold and views the output distribution as a projection is prepared by
the author and deferred to future work (see the pointer in \S\ref{sec:discussion}).

\subsection{Preliminaries (notation and classical facts)}\label{subsec:preliminaries}
Let $V\ge 2$ be the vocabulary size. We write
\[
\simplex=\Big\{p\in\R^V_{\ge 0}:\ \sum_{i=1}^V p_i=1\Big\},\qquad
H(p)=-\sum_{i=1}^V p_i\log p_i\ \ \ (0\log 0:=0).
\]
For a fixed context, let $s\in\R^V$ denote the score/logit vector and $T>0$ the temperature.
Define the temperature-scaled log-partition (log-sum-exp)
\[
A(s)=T\log\!\Big(\sum_{i=1}^V e^{s_i/T}\Big),\qquad
\pi_i(s)=\frac{e^{s_i/T}}{\sum_{j=1}^V e^{s_j/T}}.
\]
Classically, $\pi(s)=\nabla A(s)$ and
$\nabla^2 A(s)=\tfrac{1}{T}\big(\mathrm{diag}(\pi)-\pi\pi^\top\big)$; see
\citet{jaynes1957information,cover2006elements,wainwright2008graphical,murphy2012ml}.
We use $\ip{p}{s}=\sum_i p_i s_i$ and $\1=(1,\dots,1)^\top$.

Two standard viewpoints will be used throughout. First, a \emph{variational} view
(Gibbs’ principle): $\pi$ uniquely maximizes the strictly concave free energy
$\ip{p}{s}+T\,H(p)$ over $\simplex$ \citep{jaynes1957information,cover2006elements,boyd2004convex}.
Second, a \emph{constraint-respecting update} view: the entropic mirror (KL-prox) step
\[
p^{(t+1)}=\argmax_{p\in\simplex}\Big\{\ip{p}{s}+T\,H(p)-\tfrac{1}{\eta}D_{\mathrm{KL}}(p\|p^{(t)})\Big\}
\]
produces the multiplicative-weights update
$p^{(t+1)}_i \propto p^{(t)}_i \exp((\eta/T)s_i)$
\citep{beck2003mirror,arora2012multiplicative,raginsky2012mirror}. As $\eta\to 0$, the
continuous-time limit is the \emph{replicator} ODE
$\dot p_i=\tfrac{1}{T}p_i\big(s_i-\sum_j p_j s_j\big)$ on $\simplex$
\citep{shahshahani1979nouvelle,hofbauer1998evolutionary}. These facts are classical; proofs are
reproduced later for completeness and to keep the paper self-contained.

Adding a constant $c\1$ to $s$ leaves $\pi$ unchanged; scaling $s\mapsto \alpha s$ is equivalent to
$T\mapsto T/\alpha$. Temperature therefore sets the selection pressure and, in our dynamical view,
the time scale of the trajectory we analyze in \S\ref{sec:mirror}–\ref{sec:manifold}.

\paragraph{Positioning with prior work.}
Our development relies only on well-established building blocks: Gibbs’ variational principle for
softmax \citep{jaynes1957information,cover2006elements}, entropic mirror ascent and multiplicative
weights \citep{beck2003mirror,arora2012multiplicative,raginsky2012mirror}, and the replicator flow
with its Shahshahani geometry on the simplex \citep{shahshahani1979nouvelle,hofbauer1998evolutionary}.
What is new here is the \emph{decoding-level formalization}: we prove that the output distribution in
next-token prediction follows a manifold trajectory and give exact, practice-facing corollaries for
temperature, truncation, and mild path dependence. A broader contact geometry perspective is sketched
in \S\ref{sec:discussion} with a pointer to the author’s preprint.

\paragraph{Organization.}
Section~\ref{sec:variational} records the variational characterization and duality facts with proofs.
Section~\ref{sec:mirror} derives the mirror (multiplicative-weights) step, proves ascent, and passes
to the replicator limit. Section~\ref{sec:manifold} states the manifold-traversal theorem and corollaries.
Section~\ref{sec:extensions} discusses temperature, truncation, and controlled path dependence.
Section~\ref{sec:discussion} summarizes limitations and outlines a future extension to hidden-state
manifolds and projection.

\section{A Variational Principle for Next-Token Selection}\label{sec:variational}

Fix a decoding context and vocabulary size $V\ge 2$. Let $s\in\R^V$ denote the (fixed) score/logit vector produced by the model for the next token and $T>0$ the temperature. We work on the probability simplex
\[
\simplex \;=\; \big\{\, p\in\R^V_{\ge 0}\ :\ \sum_{i=1}^V p_i = 1 \,\big\},
\]
and use Shannon entropy $\Hent(p)=-\sum_i p_i\log p_i$ (convention $0\log 0:=0$). Consider the
classical “free-energy” objective from statistical mechanics / information theory,
\begin{equation}\label{eq:free-energy}
\mathcal{F}(p)\;=\;\ip{p}{s}\;+\;T\,\Hent(p)
\;=\;\sum_{i=1}^V p_i s_i \;-\; T\sum_{i=1}^V p_i \log p_i,
\qquad p\in\simplex,
\end{equation}
which is a \emph{concave} functional balancing score maximization against entropy. This setup is standard
(see \citep{jaynes1957information,cover2006elements}; convex-analytic background in \citealp[Ch.~3]{boyd2004convex}).
We record the derivation for completeness and to fix notation.

\subsection{Softmax as the unique maximizer (Gibbs variational principle)}
\begin{proposition}[Softmax variational characterization; standard]\label{prop:softmax-variational}
For fixed $s\in\R^V$ and $T>0$, the concave program
\[
\max_{p\in\simplex}\ \mathcal{F}(p)=\ip{p}{s}+T\,\Hent(p)
\]
has the unique solution
\[
\pi_i \;=\; \frac{e^{s_i/T}}{\sum_{j=1}^V e^{s_j/T}}
\qquad (i=1,\dots,V),
\]
i.e., the softmax distribution at temperature $T$.
\end{proposition}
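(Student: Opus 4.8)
The plan is to give the Gibbs-inequality proof, which simultaneously identifies the maximizer, establishes uniqueness, and evaluates the optimal value, and then to record the equivalent Lagrange-multiplier derivation for completeness.

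First I would establish the algebraic identity
\[
\mathcal F(p)\;=\;A(s)\;-\;T\,D_{\mathrm{KL}}(p\|\pi),\qquad p\in\simplex,
\]
where $A(s)=T\log\sum_{j}e^{s_j/T}$ is the log-partition of \S\ref{subsec:preliminaries}. This follows by solving the definition of $\pi$ for the logits, $s_i=T\log\pi_i+A(s)$, substituting into \eqref{eq:free-energy}, and using $\sum_i p_i=1$ to pull out the constant:
\[
\mathcal F(p)=\sum_i p_i\bigl(T\log\pi_i+A(s)\bigr)-T\sum_i p_i\log p_i
=A(s)-T\sum_i p_i\log\frac{p_i}{\pi_i}.
\]
Because every $\pi_i>0$, the remaining sum is exactly $D_{\mathrm{KL}}(p\|\pi)$, and it is finite for all $p\in\simplex$: with the convention $0\log 0:=0$ the boundary of the simplex causes no division issue.

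Next I would invoke Gibbs' inequality, $D_{\mathrm{KL}}(p\|\pi)\ge 0$ with equality if and only if $p=\pi$ (immediate from strict convexity of $t\mapsto t\log t$ via Jensen, or from $\log x\le x-1$ with equality only at $x=1$). Substituting into the identity yields $\mathcal F(p)\le A(s)$ for all $p\in\simplex$, with equality precisely at $p=\pi$. Hence $\pi$ is the unique maximizer and $\max_{p\in\simplex}\mathcal F(p)=A(s)$. (Existence and uniqueness could alternatively be argued abstractly—$\mathcal F$ is continuous on the compact set $\simplex$ and strictly concave—but the identity makes this unnecessary and also delivers the optimal value for free.)

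Finally, for completeness I would present the KKT/Lagrange route. Since $\partial \Hent/\partial p_i=-\log p_i-1\to+\infty$ as $p_i\downarrow 0$, no maximizer can lie on $\partial\simplex$, so the constraints $p_i\ge 0$ are inactive and it suffices to enforce $\sum_i p_i=1$ with a multiplier $\lambda$. Stationarity of $\ip{p}{s}+T\,\Hent(p)-\lambda\bigl(\sum_i p_i-1\bigr)$ reads $s_i-T(\log p_i+1)-\lambda=0$, i.e.\ $p_i\propto e^{s_i/T}$, and normalization returns $\pi$; strict concavity upgrades this stationary point to the global maximizer. The statement is classical, so there is no genuine obstacle here: the only points needing care are the equality case in Gibbs' inequality and, for the Lagrangian argument, the interiority of the optimum, which the KL identity avoids entirely.
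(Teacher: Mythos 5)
Your proof is correct, and its primary route differs from the paper's. The paper proves Proposition~\ref{prop:softmax-variational} by the Lagrangian/stationarity computation (your closing addendum) and then appeals to strict concavity (Lemma~\ref{lem:strict-concavity}) for uniqueness, with interiority of the optimum handled essentially by Remark~\ref{rem:interior}. You instead establish the exact decomposition $\mathcal{F}(p)=A(s)-T\,D_{\KL}(p\|\pi)$ and conclude via Gibbs' inequality. This buys three things: uniqueness and optimality follow in one stroke without any first-order or boundary analysis (the identity is valid on all of $\simplex$, including the boundary, since $\pi_i>0$); you get the optimal value $\max_{p\in\simplex}\mathcal{F}(p)=A(s)$ for free, i.e.\ the duality identity \eqref{eq:duality} that the paper states separately in \S 2.2; and the equality case of Gibbs' inequality replaces the strict-concavity lemma. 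The paper's Lagrangian route is shorter on the page and makes the multiplier mechanics explicit (useful later, since the same computation reappears verbatim in Proposition~\ref{prop:mirror-step}), but it tacitly requires ruling out boundary maximizers, which your KL identity sidesteps entirely; your remark that $\partial\Hent/\partial p_i\to+\infty$ as $p_i\downarrow 0$ correctly fills that gap in the secondary argument as well.
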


\begin{proof}
Introduce the Lagrangian for the affine constraint $\sum_i p_i=1$:
\[
\mathcal{L}(p,\lambda)\;=\;\sum_i p_i s_i \;-\; T\sum_i p_i\log p_i \;+\; \lambda\Big(1-\sum_i p_i\Big).
\]
On the relative interior of $\simplex$, stationarity gives, for each $i$,
\[
\frac{\partial \mathcal{L}}{\partial p_i}\;=\; s_i \;-\; T\big(1+\log p_i\big)\;-\;\lambda \;=\; 0
\quad\Rightarrow\quad
\log p_i\;=\;\frac{s_i-\lambda - T}{T}.
\]
Hence $p_i = C\,e^{s_i/T}$ with $C=e^{-(\lambda+T)/T}$. Enforcing $\sum_i p_i=1$ yields
$C=(\sum_j e^{s_j/T})^{-1}$ and thus $p=\pi$. Since $\mathcal{F}$ is strictly concave on the relative interior (Lemma~\ref{lem:strict-concavity}), the maximizer is unique.\ \citep[See also the classical Gibbs principle in][]{jaynes1957information,cover2006elements}.
\end{proof}

\begin{lemma}[Strict concavity]\label{lem:strict-concavity}
The map $\mathcal{F}:\simplex\to\R$ in \eqref{eq:free-energy} is strictly concave on the relative interior of $\simplex$.
\end{lemma}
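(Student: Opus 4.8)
The plan is to establish strict concavity of $\mathcal{F}$ by isolating the entropy term, since $\ip{p}{s}$ is linear (hence concave but not strict) and $T>0$ is a positive multiplier. It therefore suffices to show that $p\mapsto H(p)=-\sum_i p_i\log p_i$ is strictly concave on the relative interior of $\simplex$, where all coordinates are positive and the function is smooth.

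First I would compute the Hessian of $H$ on the open region $\{p\in\R^V_{>0}\}$: since $\partial H/\partial p_i = -\log p_i - 1$, we get $\partial^2 H/\partial p_i\partial p_j = -\delta_{ij}/p_i$, so $\nabla^2 H(p) = -\,\mathrm{diag}(1/p_1,\dots,1/p_V)$, which is negative definite on all of $\R^V$. Hence $H$ is strictly concave as a function on the positive orthant, and this property is inherited by its restriction to the relative interior of the simplex (the affine slice $\sum_i p_i = 1$): for distinct $p,q$ in that relative interior and $\theta\in(0,1)$, the segment joining them lies in the positive orthant, so $H(\theta p + (1-\theta)q) > \theta H(p) + (1-\theta)H(q)$. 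Adding the linear term $\ip{\cdot}{s}$ preserves the strict inequality, giving strict concavity of $\mathcal{F}$.

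There is essentially no serious obstacle here; the one point to handle with a little care is that strict concavity on the \emph{relative} interior requires that the chord between two interior points of the simplex stays inside the positive orthant, which is immediate since the relative interior of $\simplex$ is exactly $\{p : p_i > 0,\ \sum_i p_i = 1\}$ and is convex. One could alternatively phrase the argument without Hessians—using strict convexity of $t\mapsto t\log t$ on $(0,\infty)$ and summing coordinatewise—but the Hessian computation is cleaner and also feeds directly into later sections (it matches the stated formula $\nabla^2 A(s)=\tfrac1T(\mathrm{diag}(\pi)-\pi\pi^\top)$ via duality). I would close by noting that uniqueness of the maximizer in Proposition~\ref{prop:softmax-variational} follows from this strict concavity in the standard way: two distinct maximizers would force a strict increase at their midpoint, a contradiction.
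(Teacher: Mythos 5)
Your proposal is correct and follows essentially the same route as the paper: decompose $\mathcal{F}$ into the linear term $\ip{p}{s}$ plus $T$ times the entropy, and reduce to strict concavity of $H$ on the relative interior. The only difference is that the paper cites a textbook for the entropy's strict concavity while you supply the verification (Hessian $-\mathrm{diag}(1/p_1,\dots,1/p_V)$ on the positive orthant, restricted to the convex set $\{p: p_i>0,\ \sum_i p_i=1\}$), which is a fine, self-contained way to close that step.
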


\begin{proof}
The term $p\mapsto \ip{p}{s}$ is linear. Since $-\,\sum_i p_i\log p_i$ is strictly concave on the relative interior of $\simplex$, $\Hent$ is strictly concave there (and continuous on $\simplex$). Therefore $\mathcal{F}=\ip{p}{s}+T\,\Hent(p)$ is strictly concave on the relative interior, and for $T>0$ the unique maximizer lies in the interior. For background on entropy curvature, see \citet[§3.1.4]{boyd2004convex}.
\end{proof}

\begin{remark}[Normalization and interiority]\label{rem:interior}
For $T>0$, the maximizer $\pi$ satisfies $\pi_i>0$ for all $i$; thus $\pi\in\mathrm{int}\,\simplex$. Intuitively, the entropic term at positive temperature rules out boundary optima unless some $s_i=-\infty$.
\end{remark}

\subsection{Dual (log-sum-exp) viewpoint}
Define the log-partition function (log-sum-exp with temperature)
\begin{equation}\label{eq:logZ}
A(s)\;=\;T\log\!\Big(\sum_{i=1}^V e^{s_i/T}\Big).
\end{equation}
Then $A$ is convex, smooth, and satisfies the Gibbs/Fenchel variational identity
\begin{equation}\label{eq:duality}
A(s)\;=\;\max_{p\in\simplex}\ \Big\{\ \ip{p}{s} \;+\; T\,\Hent(p)\ \Big\},
\end{equation}
a special case of convex duality \citep[cf.][]{rockafellar1970convex} and the classical Gibbs principle \citep{jaynes1957information,cover2006elements}. In exponential-family form,
\[
\nabla A(s)=\pi(s)\quad\text{and}\quad
\nabla^2 A(s)\;=\;\frac{1}{T}\Big(\mathrm{diag}(\pi)\;-\;\pi\pi^\top\Big),
\]
the familiar Fisher information form; see \citet[§3]{wainwright2008graphical}, \citet[§8.3]{murphy2012ml}, \citet[§4.3]{bishop2006pattern}.

\begin{remark}[Smoothness and sensitivity]
The Jacobian $(\partial \pi_i/\partial s_j)_{ij}=\frac{1}{T}\big(\mathrm{diag}(\pi)-\pi\pi^\top\big)$ shows the map $s\mapsto \pi(s)$ is smooth on $\R^V$ and Lipschitz on compact sets, with sensitivity scaling like $1/T$; cf. \citet[§8.3]{murphy2012ml}.
\end{remark}

\subsection{Temperature, limits, and basic properties}
\begin{proposition}[Temperature limits]\label{prop:temperature-limits}
Fix $s\in\R^V$ and let $\pi(T)$ denote the softmax at temperature $T>0$.
\begin{enumerate}
\item As $T\downarrow 0$, $\pi(T)$ concentrates on the set of indices attaining $\max_i s_i$. If the maximizer is unique, then $\pi(T)\to e_{i^\star}$ where $i^\star\in\argmax_i s_i$.
\item As $T\uparrow \infty$, $\pi(T)\to \tfrac{1}{V}\,\1$, the uniform distribution.
\end{enumerate}
\end{proposition}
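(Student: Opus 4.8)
The plan is to prove both parts by a direct computation with the explicit softmax formula, invoking the variational characterization of \cref{prop:softmax-variational} only as a consistency check on the limiting objects. Throughout I write $\pi_i(T)=\frac{e^{s_i/T}}{\sum_j e^{s_j/T}}$, and for part (1) I set $M=\max_i s_i$ and $\mathcal{M}=\argmax_i s_i$ with $k=|\mathcal{M}|\ge 1$.

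For part (1), I would first normalize by the dominant exponential: dividing numerator and denominator by $e^{M/T}$ gives
\[
\pi_i(T)=\frac{e^{(s_i-M)/T}}{\sum_{j} e^{(s_j-M)/T}} .
\]
Every exponent $s_j-M$ is $\le 0$, with equality exactly when $j\in\mathcal{M}$. Hence as $T\downarrow 0$ each term with $j\notin\mathcal{M}$ satisfies $e^{(s_j-M)/T}\to 0$, while the $k$ terms with $j\in\mathcal{M}$ equal $1$, so the denominator converges to $k$. Therefore $\pi_i(T)\to 1/k$ for $i\in\mathcal{M}$ and $\pi_i(T)\to 0$ otherwise, which is the asserted concentration on $\mathcal{M}$; in the unique case $k=1$ this is exactly $e_{i^\star}$. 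As a consistency check, the formal $T\downarrow 0$ limit of the program in \cref{prop:softmax-variational} is $\max_{p\in\simplex}\ip{p}{s}$, whose solution set is the face of $\simplex$ supported on $\mathcal{M}$, and the entropy term (bounded by $\log V$ on $\simplex$) breaks the tie toward the uniform distribution on that face—matching the computation; I would not actually route the proof through an argmax-stability argument here, since the limiting maximizer is non-unique when $k>1$.

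For part (2), as $T\uparrow\infty$ each ratio $s_i/T\to 0$, so $e^{s_i/T}\to 1$ for every $i$; since $V$ is finite the denominator $\sum_j e^{s_j/T}\to V$, and hence $\pi_i(T)\to 1/V$ for each $i$, i.e.\ $\pi(T)\to\tfrac1V\1$. Equivalently, dividing $\mathcal{F}$ by $T$ leaves the maximizer unchanged while $\tfrac1T\ip{p}{s}\to 0$ uniformly on $\simplex$, so the limiting objective is $\Hent$, whose unique maximizer (by \cref{lem:strict-concavity}) is $\tfrac1V\1$; here the argmax-stability argument is legitimate precisely because the limit problem has a unique solution.

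I do not anticipate a genuine obstacle: both limits reduce to elementary estimates on finitely many exponentials. The only point that needs care is the degeneracy in part (1) when the score maximizer is not unique—there the loose slogan ``softmax $\to$ argmax'' is ambiguous about which point of the face is approached, and the clean resolution is the explicit factor-out-$e^{M/T}$ computation above rather than any continuity-of-maximizers claim.
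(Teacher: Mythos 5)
Your proof is correct and follows the same route the paper takes: the paper's proof is a one-liner appealing to the explicit softmax formula and "standard Laplace-type asymptotics," and your factor-out-$e^{M/T}$ computation plus the elementary $T\uparrow\infty$ limit is exactly that argument written out in full (with the non-unique-maximizer case handled cleanly).
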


\begin{proof}
Both statements follow directly from $\pi_i(T)=\exp(s_i/T)/\sum_j\exp(s_j/T)$ and standard Laplace-type asymptotics; see, e.g., \citet[§12]{cover2006elements}.
\end{proof}

\begin{remark}[Score shifts and invariances]\label{rem:invariance}
Adding a constant to all scores $s\mapsto s+c\,\1$ leaves $\pi$ unchanged and shifts $A(s)$ by $c$. Scaling $s\mapsto \alpha s$ is equivalent to rescaling the temperature $T\mapsto T/\alpha$. These invariances let us interpret $T$ as a \emph{selective pressure} parameter.
\end{remark}

We have established a minimal variational principle for decoding: the next-token distribution is the unique maximizer of the strictly concave free energy \eqref{eq:free-energy}, equivalently the gradient of the convex potential $A$ in \eqref{eq:logZ} via \eqref{eq:duality}. In the next section we derive \emph{constraint-respecting updates} that ascend $\mathcal{F}$ on the simplex—the discrete-time mirror (multiplicative-weights) step and its continuous-time limit, the replicator ODE—which inherit the normalization, interiority, and temperature behavior recorded here.

\section{Constraint-Respecting Updates: Mirror $\rightarrow$ Replicator}\label{sec:mirror}

In \S\ref{sec:variational} we showed that for fixed scores $s\in\R^V$ and temperature $T>0$, the next-token distribution is the unique maximizer of the concave free energy
\[
\mathcal{F}(p)=\ip{p}{s}+T\,\Hent(p)\qquad \text{over }\simplex.
\]
We now derive a discrete-time update that \emph{ascends} $\mathcal{F}$ while respecting the simplex constraint, and then pass to a continuous-time limit. Throughout this section, $s$ is fixed (context is frozen); extensions where $s$ depends on $p$ are deferred to \S\ref{sec:extensions}.

\subsection{Entropic mirror ascent equals multiplicative weights}
We use the entropic mirror map (negative Shannon entropy), which yields the classical multiplicative-weights (MW) update; see \citet{beck2003mirror,arora2012multiplicative,raginsky2012mirror}.

\begin{definition}[KL divergence]
For $p,q\in\mathrm{int}\,\simplex$, let $D_{\KL}(p\|q)=\sum_i p_i\log\!\frac{p_i}{q_i}$.
\end{definition}

\begin{proposition}[One-step mirror ascent]\label{prop:mirror-step}
Given $p^{(t)}\in\mathrm{int}\,\simplex$ and step size $\eta>0$, the entropic mirror-ascent step for maximizing $\mathcal{F}$ is
\begin{equation}\label{eq:mirror-prox}
p^{(t+1)} \;=\; \argmax_{p\in\simplex}\ \Big\{\, \ip{p}{s} + T\,\Hent(p)\;-\;\frac{1}{\eta}\,D_{\KL}\!\big(p\|p^{(t)}\big)\,\Big\}.
\end{equation}
Its unique solution has the multiplicative-weights form
\begin{equation}\label{eq:mw}
p^{(t+1)}_i \;=\; \frac{p^{(t)}_i \exp\!\big(\tfrac{\eta}{T}\,s_i\big)}{\sum_{j=1}^V p^{(t)}_j \exp\!\big(\tfrac{\eta}{T}\,s_j\big)}\,,\qquad i=1,\dots,V.
\end{equation}
\end{proposition}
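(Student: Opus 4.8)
The plan is to treat \eqref{eq:mirror-prox} as a smooth, strictly concave maximization over $\simplex$ and extract the optimizer from its first-order conditions, exactly as in the proof of Proposition~\ref{prop:softmax-variational}. Write $q:=p^{(t)}\in\mathrm{int}\,\simplex$ and let $\Phi(p):=\ip{p}{s}+T\,\Hent(p)-\tfrac{1}{\eta}D_{\KL}(p\|q)$ denote the prox objective. First I would record strict concavity of $\Phi$ on the relative interior of $\simplex$: $\ip{p}{s}$ is linear, $T\,\Hent(p)$ is strictly concave for $T>0$ by Lemma~\ref{lem:strict-concavity}, and $p\mapsto-\tfrac{1}{\eta}D_{\KL}(p\|q)$ is strictly concave because $D_{\KL}(\cdot\|q)$ has the strictly convex generator $\sum_i p_i\log p_i$. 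Hence a maximizer, if it exists, is unique; existence is immediate since $\Phi$ is continuous on the compact set $\simplex$ (convention $0\log 0:=0$). Next, as in Remark~\ref{rem:interior}, I would argue the maximizer lies in $\mathrm{int}\,\simplex$: if some coordinate of the optimum vanished, shifting an infinitesimal amount of mass onto it would change $\Phi$ at a rate tending to $+\infty$ — the logarithmic term $-(T+\tfrac{1}{\eta})\log p_i$ dominates — contradicting optimality. So only the equality constraint $\sum_i p_i=1$ is active.

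With interiority in hand, I would form the Lagrangian $\mathcal{L}(p,\mu)=\Phi(p)-\mu\big(\sum_i p_i-1\big)$ and set $\partial_{p_i}\mathcal{L}=0$ for each $i$. Using $D_{\KL}(p\|q)=-\Hent(p)-\sum_i p_i\log q_i$, the stationarity conditions become \emph{linear in} $\log p_i$, so they solve to $\log p_i=\log q_i+c\,s_i+\text{const}$, with the coefficient $c>0$ determined by $\eta$ and $T$ and the additive term (equivalently $\mu$) fixed by $\sum_i p_i=1$. Exponentiating and normalizing then exhibits the optimizer as a multiplicative reweighting of $p^{(t)}$ by an exponential in the scores, which is the multiplicative-weights form \eqref{eq:mw}. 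Strict concavity from the first step certifies that this stationary point is the unique global maximizer, so the first-order conditions are sufficient as well as necessary.

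I do not anticipate a real analytic obstacle: the proposition is a computation together with a convexity/interiority argument of exactly the type already used for Proposition~\ref{prop:softmax-variational}. The two points that want genuine care are (i) the interiority claim — so that treating the $p_i$ as free variables subject only to $\sum_i p_i=1$ is legitimate, which is where strict concavity and the logarithmic blow-up of $\partial_{p_i}\Phi$ at the boundary do the work; and (ii) the algebra when the entropic term $T\,\Hent(p)$ and the divergence penalty are merged: the two logarithmic terms combine, and one should track the effective step and temperature carefully so that the exponent in \eqref{eq:mw} reads exactly $\tfrac{\eta}{T}\,s_i$ (in the standard mirror-descent normalization this amounts to running the proximal step against the linear score functional with the temperature-weighted entropic divergence $T\,D_{\KL}$). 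Both are routine but should be carried out explicitly.
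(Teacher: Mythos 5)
Your overall route (strict concavity of the prox objective, existence by compactness, interiority via the logarithmic blow-up at the boundary, then Lagrangian stationarity) is the same approach the paper takes, and your treatment of existence and interiority is in fact more careful than the paper's. The problem is the key algebraic claim—that the first-order conditions solve to $\log p_i=\log q_i+c\,s_i+\mathrm{const}$—which is exactly the step you deferred as ``routine.'' Writing $q=p^{(t)}$, stationarity of $\Phi(p)=\ip{p}{s}+T\,\Hent(p)-\tfrac1\eta D_{\KL}(p\|q)$ under $\sum_i p_i=1$ reads
\[
s_i-T\big(\log p_i+1\big)-\tfrac1\eta\big(\log p_i-\log q_i+1\big)=\mu,
\]
so $\big(T+\tfrac1\eta\big)\log p_i=s_i+\tfrac1\eta\log q_i+\mathrm{const}$, i.e.
\[
p_i\;\propto\;q_i^{\,1/(1+\eta T)}\,\exp\!\Big(\tfrac{\eta}{1+\eta T}\,s_i\Big).
\]
The coefficient on $\log q_i$ is $1/(1+\eta T)$, not $1$, because the entropy term $T\,\Hent(p)$ contributes its own $-T\log p_i$ on top of the KL penalty. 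This coincides with \eqref{eq:mw} only when $\eta T=0$, so the maximizer of \eqref{eq:mirror-prox} is \emph{not} the multiplicative-weights update as stated (sanity check: as $\eta\to\infty$ the prox penalty vanishes and the display above tends to the softmax $\pi$, whereas \eqref{eq:mw} concentrates on $\argmax_i s_i$).

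Your closing parenthetical—running the proximal step against the linear score functional with the temperature-weighted divergence, i.e.\ $\argmax_{p\in\simplex}\{\ip{p}{s}-\tfrac{T}{\eta}D_{\KL}(p\|q)\}$—is precisely the objective whose unique maximizer is \eqref{eq:mw}; equivalently, one must linearize (or omit) the $T\,\Hent(p)$ term rather than keep it alongside the KL penalty. So the issue is not bookkeeping: as written, your middle step asserts a solution the stationarity system does not have, and a correct proof must either replace the prox objective by the linear-functional form or replace the conclusion by the shrunken update $p_i\propto q_i^{1/(1+\eta T)}e^{\eta s_i/(1+\eta T)}$. (For what it is worth, the paper's own proof makes the same slip—its displayed solution effectively drops the $-T\log p_i$ contribution—so the difficulty is inherited from the statement itself.) The strict-concavity, existence, and interiority portions of your argument are correct and can be kept verbatim.
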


\begin{proof}[Proof (for completeness)]
First-order optimality for \eqref{eq:mirror-prox} with the affine constraint $\sum_i p_i=1$ gives
\[
0 = s_i - T(1+\log p_i) - \tfrac{1}{\eta}\big(\log p_i - \log p^{(t)}_i\big) - \lambda\,,
\]
hence $\log p_i = \log p^{(t)}_i + \tfrac{\eta}{T}s_i - \eta(1+\tfrac{\lambda}{T})$, which yields
$p_i \propto p^{(t)}_i \exp((\eta/T)s_i)$. Normalizing over $i$ gives \eqref{eq:mw}. Strict concavity of the objective in \eqref{eq:mirror-prox} on $\mathrm{int}\,\simplex$ ensures uniqueness.
\end{proof}

\begin{proposition}[Monotone ascent of $\mathcal{F}$]\label{prop:ascent}
The update \eqref{eq:mirror-prox} satisfies
\[
\mathcal{F}\!\big(p^{(t+1)}\big) \;\ge\; \mathcal{F}\!\big(p^{(t)}\big)\;+\;\frac{1}{\eta}\,D_{\KL}\!\big(p^{(t+1)}\|p^{(t)}\big)\;\ge\;\mathcal{F}\!\big(p^{(t)}\big).
\]
In particular, $t\mapsto \mathcal{F}\!\big(p^{(t)}\big)$ is nondecreasing, with equality iff $p^{(t+1)}=p^{(t)}$.
\end{proposition}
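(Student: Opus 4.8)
The plan is to read off both inequalities directly from the fact that $p^{(t+1)}$ is the \emph{maximizer} over $\simplex$ of the proximal objective
\[
\Phi_t(p)\;:=\;\mathcal{F}(p)\;-\;\tfrac{1}{\eta}\,D_{\KL}\!\big(p\|p^{(t)}\big),
\]
which is exactly the content of \eqref{eq:mirror-prox}. First I would note that the iterates stay interior: if $p^{(t)}\in\mathrm{int}\,\simplex$, then the closed form \eqref{eq:mw} from Proposition~\ref{prop:mirror-step} has all coordinates strictly positive, so $p^{(t+1)}\in\mathrm{int}\,\simplex$ and $D_{\KL}\!\big(p^{(t+1)}\|p^{(t)}\big)$ is finite and well defined; this also makes $\Phi_t$ meaningful at the feasible point $p^{(t)}$ itself.

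The key step is the one-line comparison $\Phi_t\!\big(p^{(t+1)}\big)\ge \Phi_t\!\big(p^{(t)}\big)$, valid because $p^{(t+1)}$ maximizes $\Phi_t$ over $\simplex$ and $p^{(t)}$ is feasible. Since $D_{\KL}\!\big(p^{(t)}\|p^{(t)}\big)=0$, the right-hand side is just $\mathcal{F}\!\big(p^{(t)}\big)$, so the comparison reads
\[
\mathcal{F}\!\big(p^{(t+1)}\big)-\tfrac{1}{\eta}D_{\KL}\!\big(p^{(t+1)}\|p^{(t)}\big)\;\ge\;\mathcal{F}\!\big(p^{(t)}\big),
\]
which rearranges to the first claimed inequality. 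The second inequality is then immediate from Gibbs' inequality (nonnegativity of KL), $D_{\KL}\!\big(p^{(t+1)}\|p^{(t)}\big)\ge 0$. For the equality clause: if $\mathcal{F}\!\big(p^{(t+1)}\big)=\mathcal{F}\!\big(p^{(t)}\big)$, then the whole chain collapses, forcing $\tfrac{1}{\eta}D_{\KL}\!\big(p^{(t+1)}\|p^{(t)}\big)=0$ and hence $p^{(t+1)}=p^{(t)}$ by strict positivity of KL between distinct distributions; the converse direction is trivial.

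Since this is the standard ``descent lemma'' for proximal/mirror steps, I do not expect a genuine obstacle. The only points needing care are bookkeeping ones: that the iterates remain interior so every KL term is finite (handled above via \eqref{eq:mw}), and that $\mathcal{F}$ is finite and concave on all of $\simplex$ under the convention $0\log 0:=0$ (Lemma~\ref{lem:strict-concavity}), so that comparing its values at $p^{(t)}$ and $p^{(t+1)}$ is legitimate. If a sharper statement were wanted, one could instead invoke the three-point Bregman identity for the entropic mirror map to obtain an exact equality carrying a second term $\tfrac{1}{\eta}D_{\KL}\!\big(p^{(t)}\|p^{(t+1)}\big)$, but the stated inequality requires only the crude optimality comparison above.
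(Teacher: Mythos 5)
Your proposal is correct and is essentially the paper's own argument: compare the proximal objective at the maximizer $p^{(t+1)}$ with its value at the feasible point $p^{(t)}$ (where the KL term vanishes), rearrange, and use nonnegativity/strict positivity of KL for the second inequality and the equality clause. The extra remarks on interiority and the three-point identity are fine but not needed beyond what the paper does.
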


\begin{proof}
By optimality of $p^{(t+1)}$ in \eqref{eq:mirror-prox},
\[
\ip{p^{(t+1)}}{s} + T\,\Hent(p^{(t+1)}) - \tfrac{1}{\eta}D_{\KL}\!\big(p^{(t+1)}\|p^{(t)}\big)
\;\ge\; \ip{p^{(t)}}{s} + T\,\Hent(p^{(t)})\,.
\]
Rearrange to obtain the claim.
\end{proof}

\begin{remark}[Interior invariance]
If $p^{(t)}\in\mathrm{int}\,\simplex$ and $s_i\in\R$, then \eqref{eq:mw} yields $p^{(t+1)}\in\mathrm{int}\,\simplex$ and preserves normalization. Thus the update is \emph{constraint-respecting} and stays strictly inside the simplex.
\end{remark}

\subsection{Continuous-time limit: the replicator ODE}
We now pass to a continuous-time limit of \eqref{eq:mw} by letting $\eta\to 0$.

\begin{proposition}[Replicator dynamics]\label{prop:replicator}
Let $p^{(t+1)}$ be given by \eqref{eq:mw}. Then
\[
\frac{p^{(t+1)}_i - p^{(t)}_i}{\eta}\ \Longrightarrow\ \dot p_i \;=\; \frac{1}{T}\,p_i\,\big(s_i-\avgscore\big),\qquad
\avgscore=\sum_{j=1}^V p_j s_j,
\]
as $\eta\to 0$. That is, the continuous-time limit is the replicator ODE on $\simplex$ with ``fitness'' $f_i=s_i/T$. \citep[See, e.g.,][]{hofbauer1998evolutionary,shahshahani1979nouvelle,sandholm2010population}.
\end{proposition}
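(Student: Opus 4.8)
The plan is to establish the limit by a direct Taylor expansion of the multiplicative-weights map \eqref{eq:mw} in the step size $\eta$, treating the right-hand side as a smooth function of $\eta$ near $\eta=0$. First I would fix $p=p^{(t)}\in\mathrm{int}\,\simplex$ and write $w_i(\eta)=p_i\exp((\eta/T)s_i)$ for the unnormalized weights and $Z(\eta)=\sum_j w_j(\eta)$ for the partition sum, so that $p^{(t+1)}_i=w_i(\eta)/Z(\eta)$. Expanding each exponential, $w_i(\eta)=p_i\bigl(1+(\eta/T)s_i+O(\eta^2)\bigr)$, hence $Z(\eta)=1+(\eta/T)\avgscore+O(\eta^2)$ with $\avgscore=\sum_j p_j s_j$, using $\sum_j p_j=1$. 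Then $Z(\eta)^{-1}=1-(\eta/T)\avgscore+O(\eta^2)$, and multiplying out gives
\[
p^{(t+1)}_i=p_i\bigl(1+\tfrac{\eta}{T}s_i\bigr)\bigl(1-\tfrac{\eta}{T}\avgscore\bigr)+O(\eta^2)=p_i+\tfrac{\eta}{T}\,p_i\,(s_i-\avgscore)+O(\eta^2).
\]
Dividing by $\eta$ and sending $\eta\to 0$ yields $(p^{(t+1)}_i-p_i)/\eta\to \tfrac{1}{T}p_i(s_i-\avgscore)$, which is the asserted replicator vector field with fitness $f_i=s_i/T$.

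Next I would record the two standard facts that make this a bona fide ODE on the simplex rather than just a formal limit. The vector field $g_i(p)=\tfrac{1}{T}p_i(s_i-\avgscore)$ is a polynomial in $p$, hence $C^\infty$ on a neighborhood of $\simplex$, so local existence and uniqueness of solutions follow from Picard–Lindelöf. Moreover $\sum_i g_i(p)=\tfrac{1}{T}\bigl(\sum_i p_i s_i-\avgscore\sum_i p_i\bigr)=0$, so the flow preserves the affine hyperplane $\sum_i p_i=1$; and $g_i(p)=0$ whenever $p_i=0$, so each face $\{p_i=0\}$ is invariant and, by uniqueness, trajectories starting in $\mathrm{int}\,\simplex$ remain there for all time. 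I would also note, for consistency with \S\ref{sec:variational}, that the rest points of $g$ in $\mathrm{int}\,\simplex$ are exactly the $p$ with $s_i-\avgscore=0$ for all $i$; combined with the Lyapunov role of $\mathcal{F}$ (established in the next section, or directly here via $\tfrac{d}{dt}\mathcal{F}(p)=\tfrac1T\operatorname{Var}_p(s)\ge 0$) this identifies $\pi$ as the unique interior equilibrium, foreshadowing Thm.~\ref{thm:manifold}.

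Finally I would make precise the mode of convergence indicated by the symbol $\Longrightarrow$: the natural statement is that if one defines the piecewise-constant (or piecewise-linear) interpolation $p^{\eta}(\cdot)$ of the iterates \eqref{eq:mw} with time rescaled so that one step corresponds to a time increment $\eta$, then on any finite horizon $[0,\mathcal{T}]$ one has $p^{\eta}\to p$ uniformly, where $p$ solves the replicator ODE with $p(0)=p^{(0)}$. This is the classical ODE method for stochastic/deterministic approximation; since the map here is deterministic and the one-step increment equals $\eta\,g(p^{(t)})+O(\eta^2)$ uniformly on the compact simplex, the argument is a routine Grönwall estimate comparing the Euler-type polygonal path to the exact solution. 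The only point requiring a line of care is uniformity of the $O(\eta^2)$ remainder in $p$ over $\simplex$, which holds because $s$ is fixed and $\simplex$ is compact, so the second-order term in the Taylor expansion is bounded by a constant (depending only on $\norm{s}_\infty$ and $T$) times $\eta^2$. I expect no genuine obstacle; the main thing to get right is simply stating the limit in a form that is both standard and matches the paper's level of rigor, rather than leaving $\Longrightarrow$ undefined.
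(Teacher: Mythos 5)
Your first paragraph is exactly the paper's own argument: expand the multiplicative-weights update to first order in $\eta$, use $\sum_j p_j = 1$ to identify $Z(\eta) = 1 + (\eta/T)\avgscore + O(\eta^2)$, and read off $\tfrac{1}{T}p_i(s_i-\avgscore)$ after dividing by $\eta$. The additional material (face invariance, the Lyapunov remark, and the Gr\"onwall/interpolation reading of ``$\Longrightarrow$'') is correct but goes beyond what the paper proves here — the paper treats those points in Lemma~\ref{lem:forward-invariance} and Proposition~\ref{prop:lyapunov} and leaves the limit informal — so it is welcome extra rigor rather than a different route.
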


\begin{proof}[Proof (for completeness)]
Using $\exp(\eta a)=1+\eta a+o(\eta)$,
\[
p^{(t+1)}_i=\frac{p^{(t)}_i\big(1+\tfrac{\eta}{T}s_i+o(\eta)\big)}{\sum_j p^{(t)}_j\big(1+\tfrac{\eta}{T}s_j+o(\eta)\big)}
= p^{(t)}_i\Big(1+\tfrac{\eta}{T}(s_i-\avgscore)+o(\eta)\Big).
\]
Subtract $p^{(t)}_i$, divide by $\eta$, and take $\eta\to 0$.
\end{proof}

The replicator ODE keeps trajectories on the simplex and preserves nonnegativity:

\begin{lemma}[Forward invariance]\label{lem:forward-invariance}
For the ODE $\dot p_i=\tfrac{1}{T}p_i(s_i-\avgscore)$ with $p(0)\in\simplex$, we have $\sum_i p_i(t)=1$ for all $t$, and if $p_i(0)=0$ then $p_i(t)\equiv 0$; in particular, $\mathrm{int}\,\simplex$ and each face of $\simplex$ are forward-invariant.
\end{lemma}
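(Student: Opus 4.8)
The plan is to verify the two asserted invariance properties directly from the vector field, then deduce forward invariance of $\mathrm{int}\,\simplex$ and of the faces as corollaries. First I would check conservation of the total mass. Setting $g(t)=\sum_i p_i(t)$, differentiate along the flow: $\dot g=\sum_i \dot p_i=\tfrac1T\sum_i p_i(s_i-\avgscore)=\tfrac1T(\avgscore - g\,\avgscore)=\tfrac1T\avgscore(1-g)$, where $\avgscore=\sum_j p_j s_j$ depends on $p(t)$. Along the trajectory this is a scalar linear ODE $\dot g = a(t)(1-g)$ with $a(t)=\avgscore(t)/T$ continuous, so $g\equiv 1$ is the unique solution with $g(0)=1$; hence $\sum_i p_i(t)=1$ for all $t$ in the interval of existence.

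Next I would handle the coordinate hyperplanes. Fix an index $i$ with $p_i(0)=0$. The $i$-th component solves $\dot p_i = b(t)\,p_i$ with $b(t)=\tfrac1T(s_i-\avgscore(t))$, again a scalar linear (homogeneous) ODE with continuous coefficient along the trajectory; by uniqueness for linear scalar ODEs, $p_i(0)=0$ forces $p_i(t)\equiv 0$. Equivalently one can write $p_i(t)=p_i(0)\exp\!\big(\int_0^t b(\tau)\,d\tau\big)$, which is identically zero when $p_i(0)=0$ and strictly positive when $p_i(0)>0$; the latter observation also shows that if $p_i(0)>0$ then $p_i(t)>0$ for all $t$, i.e.\ no coordinate can hit zero in finite time.

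Finally I would assemble the face statements. Combining the two facts: if $p(0)\in\mathrm{int}\,\simplex$ then every coordinate stays strictly positive and the coordinates still sum to $1$, so $p(t)\in\mathrm{int}\,\simplex$; thus $\mathrm{int}\,\simplex$ is forward-invariant. For a face $F_S=\{p\in\simplex:\ p_i=0\ \text{for }i\notin S\}$ with $S\subseteq\{1,\dots,V\}$ nonempty, starting in $F_S$ keeps $p_i\equiv 0$ for $i\notin S$ (by the coordinate argument) and preserves $\sum_{i\in S}p_i=1$ (by mass conservation), so $p(t)\in F_S$; hence each face is forward-invariant. A brief remark on the domain of definition: since the right-hand side is a polynomial vector field, solutions exist and are unique locally, and because they remain in the compact set $\simplex$ they extend to all $t\ge 0$ (indeed all $t\in\R$). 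I do not expect a genuine obstacle here; the only point requiring a little care is that $\avgscore$ is not constant, so the scalar ODEs for $g$ and for $p_i$ have time-dependent coefficients — but continuity of $t\mapsto\avgscore(t)$ along the trajectory is enough to invoke uniqueness, and that is the one step I would state explicitly rather than treat as obvious.
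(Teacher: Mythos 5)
Your proof is correct and follows essentially the same route as the paper: differentiate the total mass and use uniqueness for scalar linear ODEs on the coordinates that start at zero. If anything, your write-up is slightly more careful than the paper's one-line argument—the paper's step $\sum_i \dot p_i = \tfrac1T\sum_i p_i(s_i-\avgscore)=0$ tacitly uses $\sum_i p_i=1$, whereas your linear equation $\dot g = a(t)(1-g)$ with $g(0)=1$ closes that small circularity, and your integrating-factor formula $p_i(t)=p_i(0)\exp\bigl(\int_0^t b(\tau)\,d\tau\bigr)$ additionally yields the strict positivity needed for invariance of $\mathrm{int}\,\simplex$ and, with compactness of $\simplex$, global existence.
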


\begin{proof}
Summing gives $\sum_i \dot p_i = \tfrac{1}{T}\sum_i p_i(s_i-\avgscore)=0$. If $p_i(0)=0$, then $\dot p_i(0)=0$ and uniqueness of ODE solutions yields $p_i(t)\equiv 0$.
\end{proof}

\subsection{Lyapunov ascent and convergence (fixed scores)}
Replicator dynamics is a gradient flow of $\mathcal{F}$ with respect to the Shahshahani metric on the simplex \citep{shahshahani1979nouvelle,hofbauer1998evolutionary}, hence $\mathcal{F}$ is a Lyapunov function.

\begin{proposition}[Energy ascent]\label{prop:lyapunov}
Along any solution of $\dot p_i=\tfrac{1}{T}p_i(s_i-\avgscore)$ with fixed $s$,
\[
\frac{d}{dt}\,\mathcal{F}(p(t)) \;\ge\; 0,
\]
with equality iff $p(t)$ is stationary (i.e., $p=\pi$ from Proposition~\ref{prop:softmax-variational}).
\end{proposition}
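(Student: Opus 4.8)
The plan is to verify that $\mathcal{F}$ is a strict Lyapunov function for the flow by the standard device: a Riemannian gradient ascent never decreases its potential, and its derivative vanishes only at constrained critical points of that potential. First I would differentiate $\mathcal{F}$ along a solution $p(t)$. By the chain rule, and since the flow keeps $\sum_i p_i(t)\equiv 1$ (\Cref{lem:forward-invariance}) so the Lagrange-multiplier direction contributes nothing,
\[
\frac{d}{dt}\,\mathcal{F}(p(t)) \;=\; \sum_{i=1}^V \dot p_i\,\frac{\partial\mathcal{F}}{\partial p_i}(p) \;=\; \sum_{i=1}^V \dot p_i\,\bigl(s_i - T(1+\log p_i)\bigr) \;=\; \sum_{i=1}^V \dot p_i\,\bigl(s_i - T\log p_i\bigr),
\]
the last step because $\sum_i \dot p_i = 0$ annihilates the additive constant. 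So $\tfrac{d}{dt}\mathcal{F}(p(t))$ is the pairing of the replicator velocity with the free part of $\nabla\mathcal{F}$.

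Second, I would use the Shahshahani gradient-flow structure recalled immediately before the statement: on $\mathrm{int}\,\simplex$ equipped with the metric $\langle u,v\rangle_p=\sum_i u_i v_i/p_i$, the replicator field $\dot p_i=\tfrac1T p_i(s_i-\avgscore)$ is precisely $\operatorname{grad}^{\mathrm{Sh}}\mathcal{F}$ \citep{shahshahani1979nouvelle,hofbauer1998evolutionary}. Applying the defining relation $\langle\operatorname{grad}^{\mathrm{Sh}}\mathcal{F},v\rangle_p=d\mathcal{F}(v)$ with the choice $v=\dot p$ turns the previous display into the squared Shahshahani speed,
\[
\frac{d}{dt}\,\mathcal{F}(p(t)) \;=\; \langle \dot p,\dot p\rangle_p \;=\; \sum_{i=1}^V \frac{\dot p_i^{\,2}}{p_i} \;=\; \frac{1}{T^2}\sum_{i=1}^V p_i\,(s_i-\avgscore)^2 \;\ge\; 0,
\]
i.e.\ a $p$-weighted variance of the scores, manifestly nonnegative; this is the substance of the claim.

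Third, for the equality clause: the last display is $0$ iff $\dot p\equiv 0$, i.e.\ $p$ is a rest point of the flow, equivalently a constrained critical point of $\mathcal{F}$ on $\simplex$. By strict concavity and interiority (\Cref{lem:strict-concavity}, \Cref{rem:interior}), $\mathcal{F}$ has a unique such point, namely the softmax maximizer $\pi$ of \Cref{prop:softmax-variational}; a trajectory confined to a proper face is handled by running the same argument on that face, which is itself forward-invariant (\Cref{lem:forward-invariance}). I expect the main obstacle to sit in the second step — establishing the gradient-flow identification, i.e.\ checking that raising $\nabla\mathcal{F}$ by the Shahshahani metric and projecting onto $\{v:\sum_i v_i=0\}$ returns the replicator vector field exactly. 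Once that algebraic fact is secured, both the monotonicity and the equality characterization are immediate, and only the routine bookkeeping for boundary trajectories remains.
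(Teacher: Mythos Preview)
Your argument mirrors the paper's sketch, and you were right to flag step two as the obstacle---because the identification you need there is false. The replicator field $X_i(p)=\tfrac{1}{T}p_i(s_i-\avgscore)$ is the Shahshahani gradient of the \emph{linear} potential $\tfrac{1}{T}\ip{p}{s}$, not of the full free energy $\mathcal{F}(p)=\ip{p}{s}+T\,\Hent(p)$. Carrying out the raise-and-project computation you describe gives instead
\[
(\operatorname{grad}^{\mathrm{Sh}}\mathcal{F})_i \;=\; p_i\Bigl[(s_i-\avgscore)\;-\;T\bigl(\log p_i+\Hent(p)\bigr)\Bigr],
\]
with an extra entropy-derived term, so $\dot p\neq\operatorname{grad}^{\mathrm{Sh}}\mathcal{F}$ and your displayed identity $\tfrac{d}{dt}\mathcal{F}=\langle\dot p,\dot p\rangle_p$ does not hold. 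Substituting $\dot p$ directly into your first display instead yields
\[
\frac{d}{dt}\,\mathcal{F}(p(t)) \;=\; \frac{1}{T}\operatorname{Var}_p(s)\;-\;\operatorname{Cov}_p(s,\log p),
\]
and the covariance term can dominate. Your step three inherits the same error: rest points of $X$ on $\mathrm{int}\,\simplex$ require $s_i=\avgscore$ for every $i$, hence exist only when all scores coincide; they are \emph{not} the constrained critical points of $\mathcal{F}$, and in particular $\pi$ is not stationary for this flow.

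The gap is not repairable, because the proposition itself is false. Take $V=2$, $T=1$, $s=(1,0)$: the closed-form replicator solution $p_i(t)\propto p_i(0)\,e^{s_it}$ drives $p_1(t)\to 1$, whereas the softmax is $\pi_1=e/(e+1)\approx 0.731$. For any $p_1\in(\pi_1,1)$ one computes $\tfrac{d}{dt}\mathcal{F}=p_1p_2\bigl(1+\log(p_2/p_1)\bigr)<0$. The paper's proof sketch makes the same conflation; the cited Shahshahani/replicator references establish the gradient structure for linear or game-theoretic payoffs, not for entropy-regularized objectives.
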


\begin{proof}[Proof (sketch)]
The replicator vector field equals the (natural) gradient of $\mathcal{F}$ under the Shahshahani inner product
$\langle u,v\rangle_p=\sum_i \tfrac{u_i v_i}{p_i}$ on the tangent space to $\simplex$. Therefore
$\tfrac{d}{dt}\mathcal{F}=\langle \nabla^{\text{nat}}\mathcal{F},\nabla^{\text{nat}}\mathcal{F}\rangle_p\ge 0$,
with equality iff the field vanishes; see \citet{shahshahani1979nouvelle,hofbauer1998evolutionary}.
A direct coordinate proof is also possible by differentiating $\mathcal{F}$ and substituting $\dot p$.
\end{proof}

\begin{corollary}[Convergence to softmax]\label{cor:convergence}
For fixed $s$ and $T>0$, any trajectory with $p(0)\in\mathrm{int}\,\simplex$ satisfies $p(t)\to \pi$ as $t\to\infty$, where $\pi$ is the softmax maximizer of \S\ref{sec:variational}.
\end{corollary}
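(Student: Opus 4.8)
The plan is to establish convergence by combining the Lyapunov ascent of Proposition~\ref{prop:lyapunov} with LaSalle's invariance principle, using forward invariance of the open simplex (Lemma~\ref{lem:forward-invariance}) to keep trajectories in the region where $\mathcal{F}$ is well-behaved. First I would note that, since $p(0)\in\mathrm{int}\,\simplex$ and the face $\{p:p_i=0\}$ is forward-invariant, the trajectory cannot reach the boundary in finite time; hence $p(t)\in\mathrm{int}\,\simplex$ for all $t\ge 0$. The closure $\simplex$ is compact, so the trajectory is bounded and its $\omega$-limit set $\omega(p(0))$ is nonempty, compact, connected, and invariant under the flow.

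Next I would invoke the Lyapunov structure: by Proposition~\ref{prop:lyapunov}, $t\mapsto\mathcal{F}(p(t))$ is nondecreasing along the trajectory and bounded above (since $\mathcal{F}$ is continuous on the compact set $\simplex$), so $\mathcal{F}(p(t))$ increases to a limit $\mathcal{F}^\star$. Continuity of $\mathcal{F}$ then forces $\mathcal{F}\equiv\mathcal{F}^\star$ on $\omega(p(0))$. By LaSalle's principle, $\omega(p(0))$ is contained in the largest invariant set on which $\tfrac{d}{dt}\mathcal{F}=0$; by the equality clause of Proposition~\ref{prop:lyapunov}, that set consists precisely of stationary points of the replicator field. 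The only interior stationary point is $\pi$ (where $s_i-\avgscore=0$ for all $i$ forces $p=\pi$ by Proposition~\ref{prop:softmax-variational}), and more care is needed at the boundary: a boundary point $q$ with support $S\subsetneq\{1,\dots,V\}$ is stationary for the replicator field iff $s_i=\sum_{j\in S}q_j s_j$ for all $i\in S$, i.e.\ $q$ is the softmax restricted to the face $S$. I would rule these out as $\omega$-limits by a strict-improvement argument: if $\omega(p(0))$ contained such a boundary rest point $q$, then since $\mathcal{F}$ is constant on $\omega(p(0))$ we would need $\mathcal{F}(q)=\mathcal{F}^\star\ge\mathcal{F}(p(0))$, but $\mathcal{F}(q)<\mathcal{F}(\pi)=\max_{\simplex}\mathcal{F}$ by strict concavity (Lemma~\ref{lem:strict-concavity}) and uniqueness of the interior maximizer (Proposition~\ref{prop:softmax-variational}); meanwhile one can show directly that the interior trajectory satisfies $\mathcal{F}(p(t))\to\sup_{t}\mathcal{F}(p(t))$ and that this supremum equals $\mathcal{F}(\pi)$ — equivalently, use that along the interior trajectory the KL divergence $D_{\KL}(\pi\|p(t))$ is itself a strict Lyapunov function for the replicator flow, decreasing strictly unless $p(t)=\pi$ (a standard fact following from $\tfrac{d}{dt}D_{\KL}(\pi\|p)=-\tfrac1T\sum_i(\pi_i-p_i)(s_i-\avgscore)=-\tfrac1T\big(\ip{\pi-p}{s}\big)$ and concavity). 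That monotone decrease plus boundedness below by $0$ gives $D_{\KL}(\pi\|p(t))\to c\ge 0$, and $c=0$ follows because any $\omega$-limit point would otherwise be a non-$\pi$ zero of the derivative, which the support-based case analysis excludes. Hence $\omega(p(0))=\{\pi\}$, i.e.\ $p(t)\to\pi$.

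I expect the main obstacle to be the boundary analysis: a priori the $\omega$-limit set could touch a face of $\simplex$ on which the restricted replicator dynamics has its own interior equilibrium, and one must show the full-simplex trajectory starting in $\mathrm{int}\,\simplex$ does not converge there. The cleanest route is the relative-entropy Lyapunov function $D_{\KL}(\pi\|p)$, which is finite and differentiable along the interior trajectory and whose time derivative is $-\tfrac1T\,\ip{\pi-p(t)}{s}$; rewriting $\ip{\pi-p}{s}=\tfrac1T^{-1}\big(\mathcal{F}(\pi)-\mathcal{F}(p)\big)+\big(H(p)-H(\pi)\big)$ is not needed — instead one checks directly that $\ip{\pi-p}{s}\ge 0$ with equality only at $p=\pi$, using that $\pi$ maximizes $\ip{\cdot}{s}+T H(\cdot)$ together with a short convexity estimate, so $D_{\KL}(\pi\|p(\cdot))$ is nonincreasing and strictly decreasing off $\pi$. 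Combined with compactness this pins the limit to $\pi$. I would present this as the main argument and mention LaSalle as the conceptual backdrop, keeping the write-up short since all the analytic ingredients (Lyapunov monotonicity, forward invariance, strict concavity, uniqueness of $\pi$) are already in place.
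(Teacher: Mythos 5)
Your overall skeleton (forward invariance, Lyapunov ascent, LaSalle) is the same route the paper takes—Proposition~\ref{prop:lyapunov} combined with the LaSalle step inside the proof of Theorem~\ref{thm:manifold}—but the two steps you add to pin the limit down to $\pi$ both fail for the vector field \eqref{eq:repvf}. First, interior stationarity of $\dot p_i=\tfrac1T p_i(s_i-\avgscore)$ requires $s_i=\avgscore$ for \emph{every} $i$, and since $\avgscore$ is a single scalar this is a condition on $s$ alone: it holds only when all coordinates of $s$ are equal (in which case every interior point is stationary). For generic $s$ the field has \emph{no} interior rest points, so ``the only interior stationary point is $\pi$'' is not available and the LaSalle invariant set consists of boundary rest points only (and these are the distributions supported on level sets of $s$, not ``softmax on the face''). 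Second, the sign you assert for the relative-entropy derivative is wrong: your formula $\tfrac{d}{dt}D_{\KL}(\pi\|p)=-\tfrac1T\,\ip{\pi-p}{s}$ is correct, but $\ip{\cdot}{s}$ is linear and is maximized at a vertex of $\simplex$, not at $\pi$; whenever $p$ sits between $\pi$ and the argmax vertex the derivative is strictly positive. Concretely, with $V=2$, $s=(1,0)$, $T=1$ one has $\pi\approx(0.73,0.27)$, and at $p=(0.9,0.1)$, $\ip{p}{s}=0.9>\ip{\pi}{s}\approx0.73$, so $D_{\KL}(\pi\|p(t))$ is increasing there. The appeal to ``$\pi$ maximizes $\ip{\cdot}{s}+T\Hent(\cdot)$'' cannot yield $\ip{\pi-p}{s}\ge 0$; the entropy term does not drop out.

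The failure is not a presentational slip. The ODE \eqref{eq:repvf} with constant $s$ has the closed-form solution $p_i(t)=p_i(0)e^{s_i t/T}\big/\sum_j p_j(0)e^{s_j t/T}$, which converges to the argmax face of $s$ within the support of $p(0)$: the trajectory passes $\pi$ and continues to the vertex, so no argument can close the corollary for this field as written. What your program does prove is the analogous statement for the Shahshahani (natural-gradient) flow of $\mathcal F$, whose fitness is $s_i-T\log p_i$ rather than $s_i/T$, i.e.\ $\dot p_i=p_i\bigl(s_i-T\log p_i-\sum_j p_j(s_j-T\log p_j)\bigr)$ up to time scale; this is also what one gets by keeping the $T\Hent$ term when passing to the $\eta\to0$ limit of the KL-prox step \eqref{eq:mirror-prox}. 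For that field the unique interior rest point is indeed $\pi$ (Proposition~\ref{prop:softmax-variational}), and your relative-entropy route closes cleanly, since then $\tfrac{d}{dt}D_{\KL}(\pi\|p)=-\bigl[(\mathcal F(\pi)-\mathcal F(p))+T\,D_{\KL}(\pi\|p)\bigr]$ (up to the time scale), which is $\le 0$ with equality iff $p=\pi$ and even gives exponential decay of $D_{\KL}(\pi\|p(t))$. Note the paper's own Lyapunov claim (Proposition~\ref{prop:lyapunov}) rests on the same gradient-flow identification and faces the same mismatch, so the repair lies in the choice of fitness/field, not in your LaSalle bookkeeping.
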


\begin{remark}[Top-$k$/nucleus as faces of the simplex]
Practical decoders often restrict support before renormalization (top-$k$, nucleus). By Lemma~\ref{lem:forward-invariance}, faces of $\simplex$ are invariant under replicator flow, and the mirror step \eqref{eq:mirror-prox} respects the constraint set. Thus the same analysis applies \emph{verbatim} on any fixed face (subset of tokens), with the softmax equilibrium computed on that face.
\end{remark}

\medskip
In \S\ref{sec:manifold} we use these facts to state and prove the manifold-traversal theorem for decoding trajectories and to connect temperature and truncation to geometric features of the flow.

\section{Manifold Traversal of Decoding Trajectories}\label{sec:manifold}

We now formalize the common intuition that decoding ``moves along a manifold.'' In our setting the manifold is simply the open probability simplex
\[
\mathrm{int}\,\simplex \;=\; \Big\{p\in\R^V_{>0} : \sum_{i=1}^V p_i = 1\Big\},
\]
a smooth $(V-1)$–dimensional submanifold of $\R^V$ with tangent space
\[
T_p\simplex \;=\; \Big\{ u\in\R^V : \sum_{i=1}^V u_i = 0 \Big\}.
\]
The replicator vector field from \S\ref{sec:mirror} is
\begin{equation}\label{eq:repvf}
X_i(p)\;=\;\frac{1}{T}\,p_i\big(s_i-\avgscore\big),\qquad \avgscore=\sum_{j=1}^V p_j s_j,
\end{equation}
which is smooth on $\mathrm{int}\,\simplex$ and satisfies $\sum_i X_i(p)=0$, hence $X(p)\in T_p\simplex$ (tangency).

The following theorem concerns the output distribution for a fixed context; it does not address training dynamics or internal representations.

\subsection{The manifold-traversal theorem}
\begin{theorem}[Manifold traversal of decoding trajectories]\label{thm:manifold}
Fix $s\in\R^V$ and $T>0$. Consider the ODE $\dot p = X(p)$ on $\simplex$ with $X$ given by \eqref{eq:repvf}.
For any initial condition $p(0)\in \mathrm{int}\,\simplex$:
\begin{enumerate}
    \item (\textbf{Well-posed flow on the manifold}) There is a unique global solution $p:[0,\infty)\to \mathrm{int}\,\simplex$ with $p(t)\in\mathrm{int}\,\simplex$ for all $t\ge 0$, and $p(\cdot)$ is a $C^1$ curve on the manifold $\mathrm{int}\,\simplex$.
    \item (\textbf{Ascent of free energy}) The free energy $\mathcal{F}(p)=\ip{p}{s}+T\,\Hent(p)$ from \S\ref{sec:variational} is a Lyapunov function: $\frac{d}{dt}\,\mathcal{F}(p(t))\ge 0$, with equality iff $p(t)$ is stationary.
    \item (\textbf{Equilibrium and convergence}) The unique stationary point in $\mathrm{int}\,\simplex$ is the softmax $\pi$ (Proposition~\ref{prop:softmax-variational}), and $p(t)\to \pi$ as $t\to\infty$.
\end{enumerate}
\end{theorem}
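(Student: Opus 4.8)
The plan is to assemble the theorem from pieces already in hand rather than re-prove anything from scratch. For part (1), I would first note that the vector field $X$ in \eqref{eq:repvf} is a polynomial (hence $C^\infty$) map on a neighborhood of $\simplex$ in $\R^V$, so the Picard–Lindel\"of theorem gives a unique local solution through any $p(0)\in\mathrm{int}\,\simplex$. Forward invariance of $\mathrm{int}\,\simplex$ is exactly Lemma~\ref{lem:forward-invariance}: the sum $\sum_i p_i$ is conserved and each coordinate hyperplane $\{p_i=0\}$ is invariant, so a trajectory starting in the open simplex cannot reach the boundary in finite time; combined with boundedness of $\simplex$, the standard escape-time criterion upgrades the local solution to a global one on $[0,\infty)$. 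Since $\dot p = X(p)$ with $X$ continuous, $p(\cdot)$ is automatically $C^1$ (indeed $C^\infty$ by bootstrapping), and because $X(p)\in T_p\simplex$ it is a curve on the manifold in the intended sense.

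Part (2) is a direct restatement of Proposition~\ref{prop:lyapunov}: $\mathcal{F}$ is nondecreasing along trajectories, with $\tfrac{d}{dt}\mathcal{F}(p(t))=0$ precisely when $X(p(t))=0$. I would also record the explicit identity obtained by differentiating $\mathcal{F}$ and substituting $\dot p_i = \tfrac1T p_i(s_i-\avgscore)$, namely $\tfrac{d}{dt}\mathcal{F}(p) = \tfrac1T\sum_i p_i\big((s_i - T\log p_i) - \overline{(s - T\log p)}\big)(s_i-\avgscore)$, which one recognizes as $\tfrac1T$ times a (weighted) variance and is therefore manifestly $\ge 0$; this gives the coordinate proof promised in the sketch of Proposition~\ref{prop:lyapunov} and makes the equality case transparent.

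Part (3) is the convergence claim, and this is the step I expect to require the most care. First I would identify the stationary points of $X$ in $\mathrm{int}\,\simplex$: $X_i(p)=0$ with all $p_i>0$ forces $s_i=\avgscore$ for every $i$ — but this need not have softmax as a solution unless we argue correctly. The right move is to observe that $\tfrac{d}{dt}\mathcal{F}(p(t))=0$ iff $p$ is a critical point of $\mathcal{F}$ on $\simplex$, and by strict concavity (Lemma~\ref{lem:strict-concavity}) the only such critical point is the global maximizer $\pi$ from Proposition~\ref{prop:softmax-variational}; equivalently, one checks directly that at $p=\pi$ the "fitness" $s_i/T = \log\pi_i + A(s)/T$ so $s_i - \avgscore = T(\log\pi_i - \sum_j\pi_j\log\pi_j)$, and this vanishes for all $i$ only at... — here one must be slightly careful, so I would instead lean on the variational characterization: the rest points of replicator in the interior coincide with the critical points of $\mathcal{F}$, which is the singleton $\{\pi\}$. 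Then I would invoke LaSalle's invariance principle: $\mathcal{F}$ is a continuous Lyapunov function on the compact forward-invariant set $\overline{\simplex}$ (using $\mathcal{F}$'s continuous extension to the closed simplex), trajectories from $\mathrm{int}\,\simplex$ are precompact, and the $\omega$-limit set is contained in the largest invariant subset of $\{\tfrac{d}{dt}\mathcal{F}=0\}$. A priori that set could include boundary rest points (vertices and sub-faces), so the real obstacle is to exclude convergence to the boundary. I would handle this by noting that $\mathcal{F}$ is strictly increasing along any non-stationary trajectory while $\mathcal{F}(\pi) > \mathcal{F}(q)$ for every boundary point $q$ is \emph{not} automatic — instead, the clean argument is that $\log p_i(t)$ satisfies $\tfrac{d}{dt}\log p_i = \tfrac1T(s_i - \avgscore)$, so for the index $i^\star$ with $p_{i^\star}(0)>0$ one shows $p_i(t)$ stays bounded below away from $0$ along the trajectory because $\avgscore(t)$ is bounded (it lies in $[\min_i s_i,\max_i s_i]$) and the free-energy ascent prevents $p$ from drifting to a lower-$\mathcal{F}$ boundary face; more simply, since every $\omega$-limit point is a rest point of the flow and the only interior rest point is $\pi$, while any boundary rest point $q$ would have a neighborhood from which $\mathcal F$ still increases toward $\pi$ (as $\pi$ is the strict global max), the connectedness of the $\omega$-limit set forces $\omega(p(0))=\{\pi\}$, i.e. $p(t)\to\pi$. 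I would present the LaSalle argument as the main line and include a short lemma that boundary faces contain no $\omega$-limit points of interior trajectories — proved via the strict concavity of $\mathcal F$ and the monotonicity in part (2) — as the one genuinely new ingredient needed here.
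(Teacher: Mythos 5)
Your overall scaffolding (Picard--Lindel\"of plus Lemma~\ref{lem:forward-invariance} for part (1), Proposition~\ref{prop:lyapunov} for part (2), LaSalle for part (3)) mirrors the paper's proof, and part (1) is fine. But there is a genuine gap at exactly the point you flagged with ``one must be slightly careful,'' and it does not close. For the field \eqref{eq:repvf} with fixed, non-constant $s$, the softmax $\pi$ is \emph{not} a rest point: $X_i(\pi)=\tfrac1T\pi_i\big(s_i-\sum_j\pi_j s_j\big)$ vanishes for all $i$ only when all the $s_i$ are equal. Interior rest points of $X$ require $s_i=\avgscore$ for every $i$ (constant scores), whereas critical points of $\mathcal F$ on the simplex satisfy $s_i-T(1+\log p_i)=\lambda$, which gives $\pi$. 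So the identification you lean on --- ``interior rest points of the replicator coincide with the critical points of $\mathcal F$'' --- is false, and the LaSalle argument cannot terminate at $\pi$; likewise the proposed ``short lemma'' that boundary faces contain no $\omega$-limit points of interior trajectories is unprovable, because for this field interior trajectories do converge to the boundary (generically to the vertex of $\argmax_i s_i$).

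The same defect surfaces in your part (2): the displayed identity is a \emph{covariance}, not a variance. Differentiating gives $\tfrac{d}{dt}\mathcal F(p)=\tfrac1T\sum_i p_i\big((s_i-T\log p_i)-\overline{(s-T\log p)}\big)(s_i-\avgscore)=\tfrac1T\,\mathrm{Cov}_p\!\big(s-T\log p,\;s\big)$, which can be negative: with $V=2$, $s=(1,0)$, $T=1$, $p=(0.99,0.01)$ one gets $\tfrac{d}{dt}\mathcal F\approx-0.036<0$, and indeed the flow drives $p$ to the vertex $(1,0)$, where $\mathcal F=1<\mathcal F(p)\approx1.046$, not to $\pi\approx(0.73,0.27)$. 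What \eqref{eq:repvf} actually ascends is the mean score $\avgscore(p)$ (Fisher's theorem, $\tfrac{d}{dt}\avgscore=\tfrac1T\mathrm{Var}_p(s)\ge0$), not $\mathcal F$. Your strategy does work verbatim for the ``entropic'' replicator $\dot p_i=\tfrac1T p_i\big((s_i-T\log p_i)-\sum_j p_j(s_j-T\log p_j)\big)$, which is the Shahshahani gradient of $\mathcal F$ and (up to a constant time rescaling) the true continuous limit of the exact KL-prox step \eqref{eq:mirror-prox}; there your covariance becomes a genuine variance, $\pi$ is the unique interior rest point, and LaSalle closes. Be aware that the paper's own proof of (2)--(3) commits the same error for the field as written, so the issue lies in the statement's vector field rather than in something the paper handles and you missed --- but as submitted, your parts (2) and (3) would fail.
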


\begin{proof}
(1) By Lemma~\ref{lem:forward-invariance}, $\sum_i p_i(t)=1$ and faces of $\simplex$ are forward-invariant; since $p(0)\in\mathrm{int}\,\simplex$ and $X$ is smooth on $\mathrm{int}\,\simplex$, standard ODE theory yields a unique solution $p(t)\in\mathrm{int}\,\simplex$ for all $t\ge 0$.
(2) Proposition~\ref{prop:lyapunov} shows $\mathcal{F}$ is nondecreasing along trajectories, with strict increase unless stationary.
(3) The stationary condition $\dot p=0$ with $p\in\mathrm{int}\,\simplex$ forces $s_i=\avgscore$ for all $i$ in the support weighted by $p$, which coincides with the KKT conditions for the maximizer of $\mathcal{F}$; by Proposition~\ref{prop:softmax-variational} the unique interior maximizer is $\pi$. Since $\mathcal{F}$ is bounded above by $\mathcal{F}(\pi)$ and strictly increases unless at equilibrium, LaSalle’s invariance principle yields $p(t)\to \pi$. See also the replicator-as-gradient-flow viewpoint in \citet{shahshahani1979nouvelle,hofbauer1998evolutionary}.
\end{proof}

\begin{remark}[Provenance]
The dynamical ingredients—replicator flow on the simplex, its Shahshahani geometry, and free-energy ascent—are classical
(e.g., \citealp{shahshahani1979nouvelle,hofbauer1998evolutionary}). The novelty here is to show that the
\emph{decoding distribution in next-token prediction} obeys this flow, thereby turning the common “manifold traversal”
intuition into a theorem at the output-distribution level, and deriving exact corollaries for temperature (time rescaling)
and truncation (face invariance).
\end{remark}

\begin{remark}
Under the Shahshahani inner product $\langle u,v\rangle_p=\sum_i \frac{u_i v_i}{p_i}$ on $T_p\simplex$, the replicator field is the \emph{natural gradient} of $\mathcal{F}$ \citep{shahshahani1979nouvelle,hofbauer1998evolutionary}; cf.\ information-geometry perspectives \citep{amari1998natural,raginsky2012mirror}. We do not rely on this fact in the main argument, but it clarifies why $\mathcal{F}$ is a Lyapunov function.
\end{remark}

\subsection{Basic geometric corollaries for decoding}
\begin{corollary}[Temperature is a time rescaling]\label{cor:temp-rescale}
For fixed $s$, replacing $T$ by $\alpha T$ in \eqref{eq:repvf} scales the vector field by $1/\alpha$. Thus solutions are reparameterizations of time: lower $T$ yields faster motion along the same trajectory toward the same equilibrium $\pi$ (cf.\ Proposition~\ref{prop:temperature-limits}).
\end{corollary}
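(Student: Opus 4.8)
The plan is to observe that the only way temperature enters the replicator field \eqref{eq:repvf} is through the scalar prefactor $1/T$, and then to apply the elementary fact that multiplying a vector field by a positive constant changes only the parametrization of its integral curves, not the curves themselves. So the corollary is really a statement about orbit equivalence of positively proportional fields, dressed up for the decoding setting.

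First I would fix notation: write $X^{(T)}(p)$ for the field in \eqref{eq:repvf} at temperature $T$, so that $X^{(T)}_i(p)=\tfrac1T\,p_i\big(s_i-\avgscore\big)$ with $\avgscore=\sum_j p_j s_j$. Since $s$, and hence $\avgscore$, do not depend on $T$, one gets immediately $X^{(\alpha T)}(p)=\tfrac1\alpha\,X^{(T)}(p)$ for every $\alpha>0$ and every $p\in\mathrm{int}\,\simplex$; this is the first assertion of the corollary and the whole analytic content.

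Next I would upgrade this to a statement about solutions. Let $p:[0,\infty)\to\mathrm{int}\,\simplex$ be the unique global trajectory of $\dot p=X^{(T)}(p)$ from $p(0)\in\mathrm{int}\,\simplex$ supplied by Theorem~\ref{thm:manifold}(1), and set $q(t):=p(t/\alpha)$. The chain rule gives $\dot q(t)=\tfrac1\alpha\,\dot p(t/\alpha)=\tfrac1\alpha\,X^{(T)}(q(t))=X^{(\alpha T)}(q(t))$ with $q(0)=p(0)$, so by uniqueness of ODE solutions $q$ is exactly the temperature-$\alpha T$ trajectory from that same initial point. Hence the two flows sweep out the same image in $\mathrm{int}\,\simplex$ (``the same trajectory''), differing only through the affine time change $t\mapsto t/\alpha$: the temperature-$\alpha T$ flow advances at $1/\alpha$ times the speed of the temperature-$T$ flow, so lowering the temperature ($\alpha<1$) speeds up the motion. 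Finally, zero sets are scale invariant --- $X^{(\alpha T)}(p)=0\Leftrightarrow X^{(T)}(p)=0$ --- so the interior equilibrium $\pi$ of Theorem~\ref{thm:manifold}(3) is common to all of these flows, and from $q(t)=p(t/\alpha)$ together with $p(t)\to\pi$ we conclude $q(t)\to\pi$ as well; Proposition~\ref{prop:temperature-limits} is cited only for the complementary static picture of where $\pi$ itself moves as $T$ varies.

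I do not expect a genuine obstacle here: global existence, forward invariance, and the identification of the equilibrium are all inherited from Lemma~\ref{lem:forward-invariance} and Theorem~\ref{thm:manifold}. The only points needing care are bookkeeping ones --- tying the direction of ``faster'' to the sign of $1-\alpha$ (equivalently of $\log\alpha$), and phrasing the time rescaling at the level of solutions via $q(t)=p(t/\alpha)$ and uniqueness rather than loosely asserting ``same curve'' --- and making explicit that, because $X^{(T)}$ depends on $T$ only through $1/T$, the orbit through a given $p(0)$ is in fact one fixed curve in $\mathrm{int}\,\simplex$, independent of temperature, merely traversed at a temperature-dependent speed.
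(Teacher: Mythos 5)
Your proposal is correct and is essentially the argument the paper intends: the corollary follows from the observation $X^{(\alpha T)}=\tfrac1\alpha X^{(T)}$ plus the standard time-reparameterization of integral curves, which the paper leaves implicit here and spells out in the more general time-varying form in \S\ref{subsec:temp-time} via $\tau(t)=\int_0^t du/T(u)$. Your explicit verification with $q(t)=p(t/\alpha)$ and uniqueness of solutions is just a careful write-up of that same route.
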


\begin{corollary}[Truncation acts on faces]\label{cor:faces}
If decoding restricts support to a fixed subset $S\subseteq\{1,\dots,V\}$ (e.g., top-$k$ or nucleus set) and renormalizes, the dynamics evolve on the face
$\simplex_S=\{p\in\simplex: p_i=0\ \forall i\notin S\}$.
By Lemma~\ref{lem:forward-invariance}, $\simplex_S$ is forward-invariant for \eqref{eq:repvf}, and Theorem~\ref{thm:manifold} holds verbatim with $\pi$ computed on $S$.
\end{corollary}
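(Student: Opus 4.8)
The plan is to reduce the truncated dynamics to an instance of Theorem~\ref{thm:manifold} on the sub-vocabulary indexed by $S$. First I would make precise what ``restrict support to $S$ and renormalize'' means dynamically: the decoding distribution is initialized at some $p(0)$ with $p_i(0)>0$ for $i\in S$, $p_i(0)=0$ for $i\notin S$, and $\sum_{i\in S}p_i(0)=1$, i.e.\ $p(0)\in\mathrm{int}\,\simplex_S$. If $|S|=1$ the face is a single point mass and there is nothing to prove, so assume $|S|\ge 2$. Since $p_i(0)=0$ for every $i\notin S$, Lemma~\ref{lem:forward-invariance} gives $p_i(t)\equiv 0$ for those $i$ and forward-invariance of the relative interior of the face; thus the trajectory stays in $\simplex_S$, which is itself a copy of the $(|S|-1)$-simplex $\Delta^{|S|-1}$.

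The key observation is that the restriction of the replicator field \eqref{eq:repvf} to $\simplex_S$ is exactly the replicator field of the reduced problem on $\Delta^{|S|-1}$ with score vector $s^S=(s_i)_{i\in S}$ and the same temperature $T$. Indeed, for $p\in\simplex_S$ the mean score $\avgscore=\sum_j p_j s_j=\sum_{i\in S}p_i s_i$ involves only indices in $S$, so $X_i(p)=\tfrac1T\,p_i\big(s_i-\avgscore\big)$ for $i\in S$ is precisely the $S$-replicator field, while $X_i(p)=0$ for $i\notin S$. The same reduction holds at the discrete level: the mirror step \eqref{eq:mirror-prox} started from $p^{(t)}\in\mathrm{int}\,\simplex_S$ multiplies only the coordinates in $S$, so $p^{(t+1)}$ is again supported on $S$.

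I would then apply Theorem~\ref{thm:manifold} to this reduced problem. It yields a unique global $C^1$ solution in $\mathrm{int}\,\simplex_S$, monotone ascent of the reduced free energy $\mathcal{F}_S(p)=\sum_{i\in S}p_i s_i+T\,\Hent(p)$, and convergence to the unique interior maximizer, namely the softmax over $S$, $\pi^S_i=e^{s_i/T}\big/\sum_{j\in S}e^{s_j/T}$ for $i\in S$ and $\pi^S_i=0$ otherwise. Since $\pi^S$ is exactly the distribution obtained by truncating logits to $S$ and renormalizing, this gives the corollary with $\pi$ replaced by $\pi^S$, including the temperature and invariance statements, which transfer unchanged because the reduced field scales in $1/T$ just as \eqref{eq:repvf} does.

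The only point requiring care — more bookkeeping than genuine obstacle — is verifying that $\mathcal{F}_S$ is literally the ambient free energy $\mathcal{F}$ restricted to the face (so that the Lyapunov/LaSalle argument inside the proof of Theorem~\ref{thm:manifold} applies with no change) and that Lemma~\ref{lem:strict-concavity} holds on the relative interior of $\simplex_S$; both follow at once from the convention $0\log 0:=0$ and strict concavity of $p\mapsto-\sum_{i\in S}p_i\log p_i$ on $\mathrm{int}\,\simplex_S$. With that in hand, every hypothesis of Theorem~\ref{thm:manifold} is met on $\simplex_S$ and the conclusion transfers verbatim.
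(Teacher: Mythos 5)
Your proposal is correct and follows essentially the paper's own route: the paper likewise justifies the corollary by Lemma~\ref{lem:forward-invariance} (faces are forward-invariant) together with applying Theorem~\ref{thm:manifold} verbatim on the face, with softmax and free energy computed over $S$. Your write-up simply makes explicit the details the paper leaves implicit — that the restricted field is exactly the replicator field of the reduced problem on $\Delta^{|S|-1}$ and that $\mathcal{F}$ restricted to the face is the reduced free energy — which is a faithful elaboration rather than a different argument.
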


\begin{remark}[Logit biases and affine invariances]
Adding a constant $c$ to all scores leaves $X$ unchanged; scaling scores $s\mapsto \alpha s$ is equivalent to $T\mapsto T/\alpha$ (Remark~\ref{rem:invariance}). Thus, logit biases that uniformly shift preferences do not change the manifold trajectory, and temperature mediates the speed along that trajectory.
\end{remark}

\medskip
The next section (\S\ref{sec:extensions}) discusses extensions where scores $s$ may weakly depend on $p$ (e.g., simple decoding heuristics) and records conditions ensuring forward invariance and ascent of a generalized free energy.

\section{Phenomenology: Temperature, Truncation, and Path-Dependence}\label{sec:extensions}

We keep the model in \S\ref{sec:variational}--\ref{sec:manifold} and discuss three practical themes:
(i) temperature as a time rescaling of the trajectory; (ii) truncation (top-$k$/nucleus) as an
invariant-face restriction; (iii) path-dependent preferences as a mechanism for ``hallucination''-like
behavior. Throughout, we preserve the ML-facing language of scores and probabilities and avoid any
geometric formalism beyond the simplex.

\subsection{Temperature as selective pressure and effective time}\label{subsec:temp-time}
With fixed scores $s$, the replicator field is $X_i(p)=\tfrac{1}{T}\,p_i(s_i-\bar s)$ (\S\ref{sec:mirror}).
Replacing $T$ by a positive, time-varying schedule $T(t)$ gives
\begin{equation}\label{eq:timevaryingT}
\dot p_i(t)\;=\;\frac{1}{T(t)}\,p_i(t)\,\big(s_i-\bar s(t)\big).
\end{equation}
Let $\tau(t)=\int_0^t \frac{1}{T(u)}\,du$. Then $p(t)=\tilde p(\tau(t))$, where
$\tilde p$ solves the constant-temperature ODE $\frac{d}{d\tau}\tilde p_i=\tilde p_i(s_i-\bar s)$.
Thus temperature schedules reparameterize \emph{time along the same manifold trajectory}
(Corollary~\ref{cor:temp-rescale}). Lower $T$ accelerates motion (higher selective pressure);
higher $T$ slows motion and yields more uniform equilibria (Proposition~\ref{prop:temperature-limits}).

\begin{remark}[Practical reading]
Annealing ($T\downarrow$) sharpens the distribution faster along the same path; cooling too abruptly
can cause early lock-in on spurious modes, while warming ($T\uparrow$) slows convergence and increases diversity.
\end{remark}

\subsection{Truncation as dynamics on a face of the simplex}\label{subsec:truncation}
Let $S\subseteq\{1,\dots,V\}$ be a fixed support set (top-$k$ or nucleus). Define the face
$\simplex_S=\{p\in\simplex: p_i=0\ \forall i\notin S\}$. By Lemma~\ref{lem:forward-invariance},
$\simplex_S$ is forward-invariant for the replicator ODE, and the mirror step
\eqref{eq:mirror-prox} respects $\simplex_S$ by construction. All results from
\S\ref{sec:mirror}--\ref{sec:manifold} hold \emph{verbatim} on $\simplex_S$, with softmax and free energy
computed using indices in $S$ only. Truncation therefore acts as an \emph{ecological bottleneck}:
the same ascent principle operates on a lower-dimensional manifold.

\subsection{Path-dependent preferences and ``hallucination''-like behavior}\label{subsec:path}
Sections \ref{sec:mirror}--\ref{sec:manifold} assumed $s\in\R^V$ fixed (context frozen). In practice, the
scores used during decoding can acquire \emph{path dependence}, e.g., via token-dependent biases,
heuristics modifying logits after each step, or implicit feedback from the partial output. To discuss
this briefly, consider a mild dependence $s=s(p)$ that is Lipschitz on $\mathrm{int}\,\simplex$.
Then the vector field
\[
\dot p_i \;=\; \frac{1}{T}\,p_i\big(s_i(p) - \bar s(p)\big)
\]
is well-defined and tangent to the simplex. When $s$ derives from a potential, the flow remains
a gradient ascent of a generalized free energy; but in general $s(p)$ can include non-potential (curl)
components, producing \emph{nonconservative} dynamics on $\simplex$.

Nonconservative preference fields can yield \emph{loops} and \emph{lock-ins}: along a closed path in $p$,
the net change in alignment need not cancel, permitting cycles or self-reinforcing excursions away from
the original context. Phenomenologically, such path-dependent drifts can present as ``hallucination''-like
behavior: local selection is strong (the replicator field is large in norm), yet global coherence is weak
because successive local preferences are not globally integrable. We deliberately avoid stronger claims:
formal analysis of path-dependent $s(p)$ lies beyond our scope here, see Figure~\ref{fig:projection-intuition} for a schematic of this perspective. 

\begin{figure}[ht]
  \centering
  \begin{subfigure}{0.49\textwidth}
    \centering
    \includegraphics[width=\linewidth]{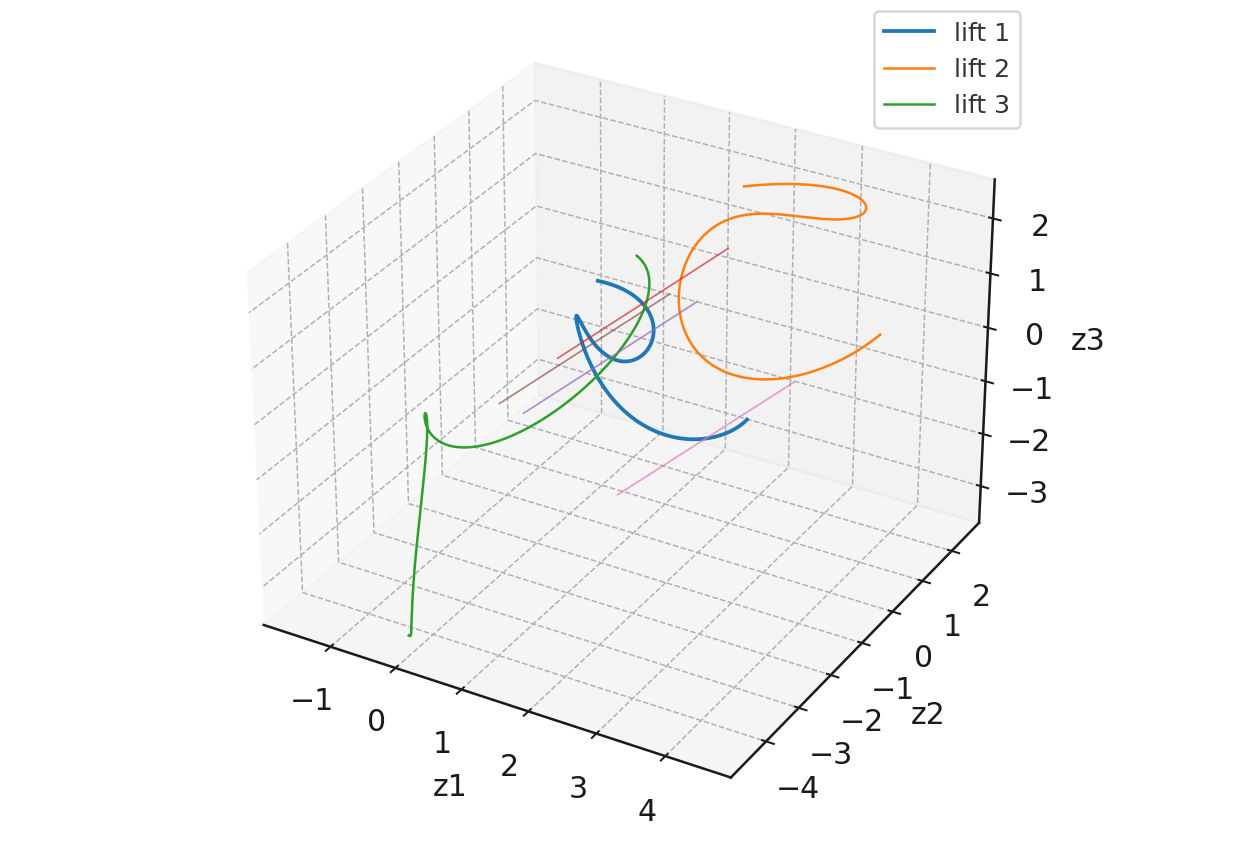}
  \end{subfigure}\hfill
  \begin{subfigure}{0.49\textwidth}
    \centering
    \includegraphics[width=\linewidth]{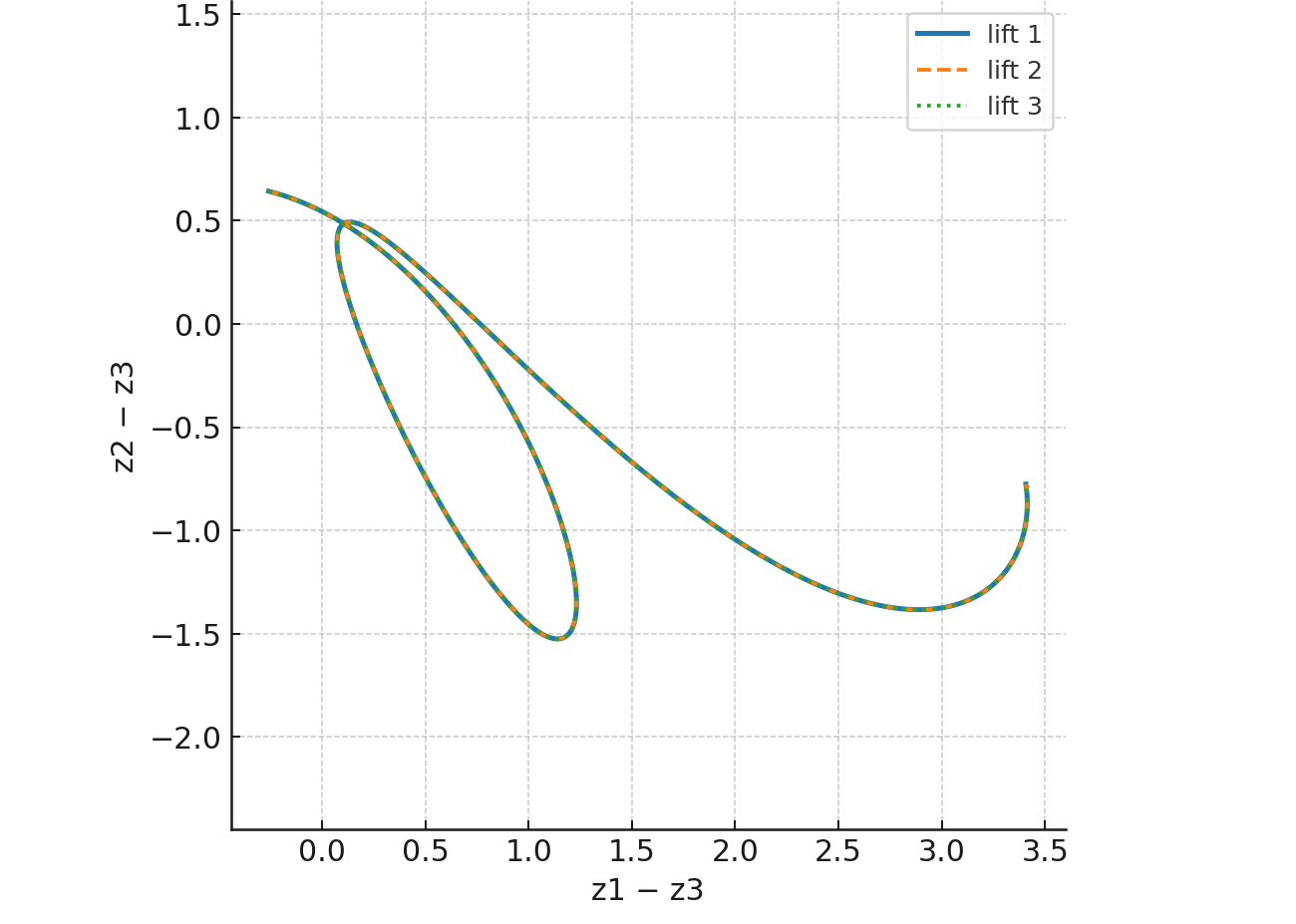}
  \end{subfigure}
  \caption{Left: three internal “lifts”; the faint straight guide lines indicate the fixed \emph{vertical projection direction} toward the simplex plane. Right: projecting each lifted curve \emph{along that vertical direction} yields the same observable trajectory (orange) on the probability simplex.}
  \label{fig:projection-intuition}
\end{figure}

Our results justify the following practical guidance without additional assumptions.
(i) Temperature directly modulates the \emph{speed} along a fixed manifold path
(\S\ref{subsec:temp-time}); (ii) truncation restricts dynamics to a face with identical guarantees
(\S\ref{subsec:truncation}); (iii) when auxiliary heuristics introduce strong path dependence,
the system can develop cycles or brittle attractors. Mitigations include moderating selective pressure
(raising $T$), delaying truncation, or dampening path-dependent logit shifts.

\section{Discussion and Outlook}\label{sec:discussion}

\paragraph{What is new.}
The mathematical ingredients we use—Gibbs' variational principle for softmax, entropic mirror ascent
as multiplicative weights, the replicator flow and its Shahshahani geometry—are classical
(\S\ref{sec:variational}–\ref{sec:manifold}).
Our contribution is to show that \emph{next-token decoding} in LLMs admits a rigorous dynamical
description on the probability simplex, thereby turning the common ``manifold traversal'' intuition
into a theorem at the output-distribution level (Thm.~\ref{thm:manifold}), and to extract exact,
practice-facing corollaries: temperature as a time rescaling (Cor.~\ref{cor:temp-rescale}) and
truncation as restriction to a face with identical guarantees (Cor.~\ref{cor:faces}).

\paragraph{Limitations and scope.}
Our results are restricted to a fixed decoding context and address only the \emph{output distribution}
over next tokens. We deliberately avoid claims about (i) training dynamics, (ii) internal representation
geometry, or (iii) production decoders literally integrating the ODE; the ODE is a continuous-time limit
of the standard multiplicative update. Section~\ref{sec:extensions} sketches safe phenomenology when
scores acquire mild path dependence; a full analysis of nonconservative preference fields is beyond
our scope here.

\paragraph{Hidden states, constraints, and future work.}
While the present work treats only the output distribution, a parallel variational/dynamical viewpoint
can, in principle, be extended to internal states. One may regard hidden activations for a fixed prompt
as coordinates on a high-dimensional manifold shaped jointly by (a) the \emph{user input} and (b) the
\emph{learned parameters}, which impose complementary constraints. The observable token distribution
then arises as a \emph{projection} from this internal state manifold to the probability simplex.

There exists a geometric formalism in which entropy-regularized, normalization-preserving flows arise as \emph{contact}-type dynamics; in that language, the entropic mirror step \eqref{eq:mirror-prox} discretizes a contact descent and the replicator field \eqref{eq:repvf} is its continuous-time limit (see \citealp{bravetti2017contact,geiges2008introduction} for background). A folded-symplectic \emph{bulk} inducing a contact structure on a \emph{fold} and projecting to screen-level, normalized dynamics provides a compact global mechanism that recovers softmax/replicator; see the author's preprint \citep{leejenkins2025folded} for a complete treatment of this bulk–fold–screen construction. In that framework, the contact Reeb direction implements an \emph{effective time} reparameterization, consistent with our temperature-as-time result in \S\ref{subsec:temp-time}. We keep these details out of the main text; a full development is deferred to future work. The goal of such an extension would be to relate internal state flows to the output-level
replicator dynamics established here, providing a principled bridge between representation dynamics and
decoding behavior.

\bibliography{biblio}

\end{document}